\newtheorem{theorem}{Theorem}[section]
\newtheorem{lemma}[theorem]{Lemma}
\newtheorem{proposition}[theorem]{Proposition}
\newcounter{principles}
\newtheorem{principle}[principles]{Principle}
\newtheorem{definition}{Definition}
\newcommand\independent{\protect\mathpalette{\protect\independenT}{\perp}}
\def\independenT#1#2{\mathrel{\rlap{$#1#2$}\mkern2mu{#1#2}}}
\title{Reparametrization Invariance for non-parametric Causal Discovery}
\author{Martin Jørgensen and Søren Hauberg\\
\small{\href{mailto:marjor@dtu.dk}{marjor@dtu.dk}\quad\href{mailto:sohau@dtu.dk}{sohau@dtu.dk}}\\
\small{Technical University of Denmark}}
\date{}
\let\cite\citep 
\begin{document}

\maketitle

\begin{abstract}
	Causal discovery estimates the underlying physical process that generates the
	observed data: does $X$ cause $Y$ or does $Y$ cause $X$? Current methodologies use structural conditions to turn the causal query into a statistical query, when only observational data is available. But what if the these statistical queries are sensitive to causal invariants? This study investigates one such invariant: \emph{the causal relationship between $X$ and $Y$ is invariant to the marginal distributions of $X$ and $Y$}. We propose an algorithm that use a non-parametric estimator that is robust to changes in the marginal distributions. This way we may marginalize the marginals, and inspect what relationship is intrinsically there. The resulting causal estimator is competitive with current methodologies and has high emphasis on the \emph{uncertainty} in the causal query; an aspect just as important as the query itself.
\end{abstract}

\section{Introduction}
Determining causal relationships is a constant challenge, and the ultimate goal of the natural sciences. 
The gold standard for establishing such relationships is \emph{intervention studies},
where the physical state of a system is manually modified to determine whether this
changes the system behavior. Such experiments are, however, often infeasible as
the interventions can be unethical, physically impossible, expensive and so forth.
This begs the question of whether causal relationships can be estimated from data in
a systematic manner.
Most work in this direction has been for high-dimensional data used to estimate
directed acyclic graphs (DAGs), but in recent years the most simple of these,
the two-vertex DAG, has gained more attention. The methods for determining
these causal bindings go under the name of \emph{causal discovery}, and the usual
approach is to assume some structural equation model, and probabilistically
verify its existence.

By assuming a particular model, it becomes possible to establish conditions
under which the causal direction is unique, thereby providing a
formalism to the causal question. From a practical point of view, this formalism
is, however, only useful when the structural model assumption is known to be true, which is
seldom the case.

In this paper, we explore the case of bivariate causal inference when model
assumptions are challenged by shifts in marginal distributions. We propose an estimator based on comparing regression errors, as in \citet{bloebaum}, but in a non-parametric way. This provides an estimator that is more robust to these distributional shift than well-known methods for bivariate causal discovery, while staying on-par in performance.
\subsection{Related Work}\label{sec:related}\vspace{-2mm}
In his seminal work, \citet{pearl} introduced causal inference for high-dimensional
observational data, phrased as the estimation of a causal structure. This is a DAG, where random variables are nodes and an edge $X \rightarrow Y$
indicates that $X$ is a (direct) cause of $Y$. Given more than three variables, such edges
can be estimated through conditional independence tests, e.g. an edge between
$X$ and $Y$ can be discarded if they are independent conditioned on a third variable $Z$.
This idea, however, breaks down in the bivariate case, which is the main focus of the present paper.

In the bivariate case, one usually must impose assumptions that break the symmetry of correlation. 
This is achieved by assuming two models --- one for $X \rightarrow Y$ and another
for $Y \rightarrow X$ --- and choosing among these either by
1) verifying exactly one of the underlying models or 2) proposing a score/complexity measure for
choosing the simplest model following the principles of Occam's razor.

\paragraph{Model Verification \cite{nonlinear, lingam, pnl}:}
It is natural to assume an \emph{additive noise model (ANM)} \cite{nonlinear, lingam}, 
i.e.\ $Y = f(X) + N_Y$, where $N_Y\independent X$.
\citet{nonlinear} show that when $f$ is nonlinear, then the true causal
direction can be identified.
Similar results hold when $f$ is linear and the noise is non-Gaussian \cite{lingam}.
However, if the underlying system is not an ANM, the analysis is inapplicable -- e.g. in the presence of hidden confounders.
\citet{pnl} extend the ANM to allow for an unknown bijective mapping of the observations
and show that this structure is identifiable for many joint distributions $\mathbb{P}_{(X,Y)}$.

\paragraph{Model Scoring \cite{CURE, igci, gpi, bloebaum}:}
An intuitive scoring mechanism is to regress $Y$ from $X$ and vice versa and ask
which direction has higher likelihood. This is, e.g., implemented by \citet{gpi}
who propose using a Gaussian Process Latent Variable Model \citep{lawrence2005probabilistic} to handle the noise/latent observations. The chosen causal direction must
then be \emph{biased} towards the prior over the latent points and sensitive to hyperparameters, which is an implicit model
assumption.

\citet{bloebaum} take an approach based on asymmetry of regression error, and show that this asymmetry is coherent with the causal direction under certain assumptions, of which the most important are the independence of the cause and the causal \emph{mechanism} \cite{elements} and that this mechanism is monotonic as a function of the cause. Loosely, they show that when the noise is sufficiently small and $X\rightarrow Y$, then
\begin{equation}\label{bloe}
	\mathbb{E}[\text{Var}(Y|X)]\, \leq\, \mathbb{E}[\text{Var}(X|Y)].
\end{equation}
They quantify these measures by parametric regression.

\citet{igci} propose the \emph{Information Geometric Causal Inference} (IGCI) scoring mechanism.
This is derived from the assumption
that data is noise free, i.e. $Y = f(X)$, and on the postulate that the true causal mechanism
$f$ is independent of the cause $X$. This is realized by non-parametrically
estimating the expected log-derivative of $f$:
\begin{equation}
	\mathbb{E}[\log |f'|] \approx \frac{1}{N-1}\sum_{i=1}^{N-1}\log\frac{|y_{i+1}-y_i|}{x_{i+1}-x_i},
\end{equation}
where $x_{i+1}>x_i$ for $i=1,\ldots,N-1$, and both $X$ and $Y$ have been preprocessed
to make them comparable, i.e. standardized wrt.\ a Gaussian or a uniform base measure.
The direction with the smallest log-derivative is then chosen as being causal.
While this mechanism provides no guarantees in the presence of noise, IGCI has
been successful on real world data; in Sec.~\ref{robustness} we, however, demonstrate that this
success is likely due to a bias in the studied benchmark data.

\subsection{Causal invariant}
As seen above, current causal inference propose one
model for each causal direction, and then select among them.
This begs the questions, \emph{what if the data does not support either model?} and \emph{can causal relationships be discovered without restrictive model assumptions?} If we believe that the causal and probabilistic domains abide by different rules, then our causal estimators should follow other paradigms than model verification/selection. We can think of this as \emph{model-bias}:  many existing methods are too sensitive to distributional and structural restrictions of probabilistic models. By this we mean that the \emph{hypothesis} of causality is tested in a domain sensitive to marginal distributions and structural equations.

We recap the basic definition of causality as expressed
by the \emph{do-calculus} \citep{pearl}.
\begin{definition}\label{def:do}
	If for some $x\neq \hat{x}$, we have that $\mathbb{P}(Y|\mathrm{do}(x))\neq \mathbb{P}(Y|\mathrm{do}(\hat{x}))$, then $X$ is a cause of\ \ $Y$. 
\end{definition}
The interventional distribution, $\mathbb{P}(Y|\mathrm{do}(x))$, is only attainable if before the experiment is conducted the experimenter has made sure $X=x$, i.e. the experimenter has \emph{intervened}.
If the above definition is satisfied, we denote this by $X\rightarrow Y$. It is immediately clear that the above definition can hold in both directions. Further, for the task at hand, to estimate the causal direction from $\mathbb{P}_{(X,Y)}$, without access to the interventional distribution apparent in the definition, one can only make qualified guesses. 

Imposing model assumptions can, in the spirit of Occam's razor, be seen as qualified
guessing. However, any such a priori interpretation of the data will bias the causal
prediction. To minimize such bias, we advocate a bivariate causal
inference approach that tries to stay clear of scores tied to probabilistic models,
and only rely on a test statistic that is well-defined for almost all datasets.




Like \citet{pearl}, we consider causal structures that are DAGs. Then, if
$X \rightarrow Y$, we must also have $X \rightarrow g(Y)$ for any function
$g$, since the contrary would construct a cycle. If $g$ is a bijection, this is equivalent to 
$f(X)\rightarrow Y$, where $f=g^{-1}$. This motivates our guiding principle.
\begin{principle}[Invariant causality] \label{prin:inv}
  A deterministic bijective reparametrization of the observed variables does not change the causal direction.
\end{principle}
We only consider bijections, as we \textit{a priori} do not
know if $X\rightarrow Y$ or $Y\rightarrow X$. The above then states that the causal 
relationship between $X$ and $Y$ is the same as between $f(X)$ and $g(Y)$, for bijections $f$ and $g$. 
Equivalently, our choice of units should not influence the causal direction;
i.e., the marginal distributions of $X$ and $Y$ must not matter.
Note that most model-based causal inference schemes are not closed under
nonlinear reparametrizations and, hence, violate Principle~\ref{prin:inv}. For instance,
a nonlinear reparametrization of an ANM does not yield another ANM.
\begin{figure}
    \centering
    \includegraphics{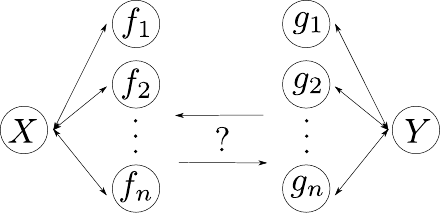}
    \caption{Visualisation of Principle \ref{prin:inv}. If $X$ is a cause of $Y$, then \emph{theoretically} arrows from $f_i$ to $g_i$ must point right for \emph{all} $i=1,\ldots,n$. We suggest to test this \emph{empirically}.}
    \label{fig:princ}
\end{figure}

Principle \ref{prin:inv} is illustrated in Figure \ref{fig:princ}. If for some $f_i$ and $g_j$, where $i,j=1,\ldots,n$, we have that $g_j$ is a cause of $f_i$, then $X$ is not a cause of $Y$. Likewise, if $f_i$ is a cause of $g_j$, then $Y$ can not be a cause of $X$, since $X$ there is a direct path from $X$ to $Y$ in the causal graph. Our idea is to construct $n$ bijections of both $X$ and $Y$, and test the causal relationship among these. If the decisions are unanimous, the causal link is likely strong. If they are inconsistent, this gives uncertainty in the causal estimator and we may interpret this inconsistency over bijections as uncertainty associated with the causal decision making. 

This discussion of invariances in causal estimators has not involved how to realize Principle \ref{prin:inv}. To this end, we consider the setup from \citet{bloebaum}. The inequality in Eq.~\ref{bloe} is shown to hold for small noise settings, when the condition
\begin{equation}
	\text{Cov}\left(\!\frac{\partial \mathbb{E}[Y|X\!=\!x]}{\partial x}, \mathbb{E}[\text{Var}(Y|X\!=\!x)]p_X(x)\!\right)\!=\!0,
\end{equation}
is satisfied. Here $p_X$ denotes the marginal distribution of the cause $X$. This criterion is similar to IGCI's idea that the expected log-derivative of the conditional mean is uncorrelated with the marginal distribution of the cause, and positively correlated in the anti-causal direction. These '\emph{uncorrelated mechanism}' ideas \cite{daniusis} fall under the causal principle of modularity and autonomy. For a broader review see \citet{elements}.  

In summary, \citet{bloebaum} prove that under similar conditions to what we shall impose, then the prediction error is greater in the anti-causal direction compared to the causal -- at least when the noise is small. Experimentally, they do regression by predetermined \emph{types}, such as polynomial or neural nets. We are interested in marginalising the underlying distribution, thus it is not obvious that some parametric form of regression should be robust to this. In the next section, we present a non-parametric estimator of the regression error. This should be seen as a means to realizing Principle~\ref{prin:inv}. If anything, causal inference is about decision-making under imperfect or uncertain information.

These are the outlines of the present work, which we use to derive
a simple causal inference scheme (Sec.~\ref{sec:qv}). We evaluate this
scheme in Sec.~\ref{experiments} and find that the empirical performance is on par
with current standard methodologies, but with the additional benefit that we
provide well-calibrated uncertainties over causal predictions.
On this path, we further derive and validate an extension to handle more than two variables
and find that this establishes a link between our proposed estimator and classic conditional independence tests for causal structures \cite{pearl}.
All proofs are in the supplementary materials.
\section{Quadratic Variation in Causal Discovery}\label{sec:qv}
If $f(X)$ is a predictor of $Y$, then $f(X)$ (trivially) correlates with $Y$. This motivates us to measure the correlation between $Y$ and the predictor $\mathbb{E}[Y|X]$. We will show that we can quantify this completely non-parametrically, i.e. not making distributional assumptions on $X$ and $Y$, besides finite second moments. In Sec.~\ref{sec:repara} we will show how this also allows us to apply Principle~\ref{prin:inv}.

To derive an estimator of this correlation, we first recap some theory from stochastic processes. Let $Y_t$ denote a real-valued stochastic process on some probability space, and $t>0$. The \emph{quadratic variation} \cite{durrett} of $Y_t$ is the increasing process defined as
\begin{equation}\label{qv}
\langle Y\rangle_t := \lim_{S\rightarrow 0}\sum_{i=1}^{n}(Y_{t_i}-Y_{t_{i-1}})^2,
\end{equation}
where $S$ is the mesh\footnote{For a partition $0<t_1<t_2<\ldots<t_N<t$, we denote the mesh as the longest distance between two points $\max\{(t_{i+1}-t_i)|i=1,\ldots,N-1)\}$.} of partitions of the interval $[0,t]$.  We define the \emph{mean quadratic variation (MQV)} as the scaling $\langle Y\rangle_t / t$, which can be seen as a measure of averaged noise over the time interval $[0,t]$. Notice that estimators akin to Eq.~\eqref{qv} for non-time series are well-known in non-parametric regression \cite{diffbased}. 

For the problem at hand, consider two real-valued random variables $X$ and $Y$ from a joint distribution $\mathbb{P}_{(X,Y)}$. Similar to other methods of causal discovery, we shall see $Y$ as a function of $X$, and vice versa. In particular we view it as a stochastic process on the interval supp$(X)$, which we assume to be bounded.
\begin{theorem}\label{maintheorem}
	Let $X$ have support on a compact and connected subset $C$ of $\mathbb{R}$,
	and assume that $\mathbb{E}[Y|X=x]$ is a continuous differentiable function
	over $C$. Assume $\mathbb{E}Y^2 <\infty$. Let further $(x_i,y_i)$, $i=1,\ldots,N$,
	be iid samples from $\mathbb{P}_{(X,Y)}$. If we order, such that $x_{i+1}\geq x_i$ for all $i=1,\ldots,N-1$, then it holds that
	\begin{align}
	\frac{1}{N-1}\sum_{i=1}^{N-1}(y_{i+1}-y_{i})^2\rightarrow 2\mathbb{E}\textrm{\emph{Var}}(Y|X),
	\end{align}
	as $N\rightarrow \infty$.
\end{theorem}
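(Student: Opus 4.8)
The plan is to split $Y$ into signal and noise and track the resulting pieces separately. Write $m(x)=\mathbb{E}[Y\mid X=x]$ and, after sorting by the covariate, set $\varepsilon_i=y_i-m(x_i)$, so that $\mathbb{E}[\varepsilon_i\mid x_i]=0$ and, conditionally on all the $x$-values, the $\varepsilon_i$ are independent across $i$ (the samples being iid). Expanding the square gives
\begin{equation*}
(y_{i+1}-y_i)^2=(m(x_{i+1})-m(x_i))^2+2(m(x_{i+1})-m(x_i))(\varepsilon_{i+1}-\varepsilon_i)+(\varepsilon_{i+1}-\varepsilon_i)^2,
\end{equation*}
and I would show that, after dividing by $N-1$, the signal term vanishes, the cross term vanishes, and the noise term converges to $2\,\mathbb{E}\,\mathrm{Var}(Y\mid X)$.

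For the signal term, continuity of $m'$ on the compact set $C$ bounds $|m(x_{i+1})-m(x_i)|\le L\,(x_{i+1}-x_i)$, whence $\sum_i (m(x_{i+1})-m(x_i))^2\le L^2\,(\max_i(x_{i+1}-x_i))\,\mathrm{diam}(C)$. The key input here is that the mesh $\max_i(x_{i+1}-x_i)\to 0$ almost surely, which holds precisely because the support $C$ is compact and connected, so every subinterval eventually receives a sample; this forces the whole sum, and a fortiori its average, to zero. The cross term is then dispatched by Cauchy--Schwarz against the (vanishing) signal term and the (bounded) noise term.

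The noise term carries the limit. Telescoping the squares gives $\sum_{i=1}^{N-1}(\varepsilon_{i+1}^2+\varepsilon_i^2)=2\sum_{i=1}^{N}\varepsilon_i^2-\varepsilon_{1}^2-\varepsilon_{N}^2$, and since sorting does not change the multiset of residuals, the law of large numbers yields $\tfrac1N\sum_i\varepsilon_i^2\to\mathbb{E}[\varepsilon^2]=\mathbb{E}\,\mathrm{Var}(Y\mid X)$, with the two boundary terms negligible because $\max_i\varepsilon_i^2/N\to0$ whenever $\mathbb{E}[\varepsilon^2]<\infty$; this produces the factor $2$. It remains to kill the consecutive-residual cross product $\tfrac1{N-1}\sum_i\varepsilon_i\varepsilon_{i+1}$: conditioning on the $x$-values it has mean zero, and because products sharing no index are uncorrelated while adjacent products carry a vanishing factor $\mathbb{E}[\varepsilon\mid x]=0$, its conditional variance collapses to $\sum_i\sigma^2(x_i)\sigma^2(x_{i+1})$ with $\sigma^2(x)=\mathrm{Var}(Y\mid X=x)$.

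I expect this last step to be the main obstacle. If $\sigma^2$ is bounded on $C$, this variance is $O(N)$, so the average has variance $O(1/N)$ and converges to zero in probability, closing the argument. Under only $\mathbb{E}Y^2<\infty$, however, $\sigma^2$ need not be bounded, and one must either invoke an (implicit) regularity assumption on the conditional variance or introduce a truncation of the residuals together with a control on the joint law of adjacent order statistics $(x_i,x_{i+1})$. Handling this consecutive-noise term, rather than the signal or boundary terms, is where the real work lies, and it also fixes the mode of convergence (in probability) delivered by the argument.
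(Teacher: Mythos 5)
Your proposal follows the same route as the paper's proof: the decomposition $y_i=m(x_i)+\varepsilon_i$, the vanishing of the signal term from smoothness of $m$ on a compact connected support, Cauchy--Schwarz for the cross term, and telescoping plus the law of large numbers for the noise term are exactly the paper's steps (the paper proves the signal step in its Lemma~\ref{lem:1} by a monotonicity trick on the unscaled sum, rather than your mesh-to-zero argument, but both are valid under the hypotheses). The substantive difference is the consecutive-residual term $\tfrac{1}{N-1}\sum_i\varepsilon_i\varepsilon_{i+1}$, which you correctly single out as the real difficulty: the paper disposes of it in one line (``vanishes due to the iid assumption and $\mathbb{E}\epsilon_i=0$''), which is not an argument, since summands having mean zero does not make their average vanish, and the sorting permutation is data-dependent. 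Your conditional-variance computation, collapsing the variance to $\sum_i\sigma^2(x_i)\sigma^2(x_{i+1})$ by conditional independence, is the right repair. Moreover, the residual obstacle you flag --- unboundedness of $\sigma^2$ under the stated hypothesis $\mathbb{E}Y^2<\infty$ --- can be closed without truncation and without extra assumptions: bound
\[
\frac{1}{(N-1)^2}\sum_{i=1}^{N-1}\sigma^2(x_i)\sigma^2(x_{i+1})\;\le\;\frac{\max_{j\le N}\sigma^2(x_j)}{N-1}\cdot\frac{\sum_{j\le N}\sigma^2(x_j)}{N-1},
\]
and note that $\tfrac1N\sum_{j}\sigma^2(x_j)\to\mathbb{E}[\sigma^2(X)]<\infty$ almost surely by the law of large numbers, while $\max_{j\le N}\sigma^2(x_j)/N\to0$ almost surely for iid integrable variables (Borel--Cantelli applied to $\mathbb{P}(\sigma^2(x_j)>\epsilon j)$); conditional Chebyshev plus dominated convergence then gives convergence to zero in probability, matching the mode of convergence you identify for the overall limit. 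With that amendment your argument is complete, and on the one delicate point it is more rigorous than the paper's own proof.
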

Theorem~\ref{maintheorem} motivates computing the following quantity for unit variance observations
\begin{equation}\label{estimator}
C_{X\rightarrow Y} := 1 - \frac{1}{2(N-1)}\sum_{i=1}^{N-1}\big(y_{i+1}-y_i\big)^2,
\end{equation}
since
\begingroup
\allowdisplaybreaks
\begin{align}
C_{X\rightarrow Y} &\rightarrow
1 - \frac{\mathbb{E}[\text{Var}(Y|X)]}{\text{Var}(Y)}
\\&= 1 - \frac{\text{Var}(Y) - \text{Var}(\mathbb{E}[Y|X])}{\text{Var}(Y)}
\\&= \frac{\text{Var}(\mathbb{E}[Y|X])}{\text{Var}(Y)} \\
&= \text{Corr}(\mathbb{E}[Y|X],Y)^2,
\end{align}
\endgroup
as $N \rightarrow \infty$. 
Eq.~\ref{estimator} measures the quality of a prediction of
$Y$ from $X$ without realizing the implied regression, and without making
specific assumptions over this regression, i.e. it measures the regression error non-parametrically.
A causal inference scheme, as suggested by Bloebaum's condition, is then to
infer the direction $X\rightarrow Y$, if $C_{X\rightarrow Y}\!>\!C_{Y\rightarrow X}$;
and symmetrically for the other direction. 

Notice the similarity here with the estimator in \cite{bloebaum} (also seen in Eq.~\eqref{bloe}), this imply that doing causal inference with \eqref{estimator}, inherits the guarantees formulated there. Notice the regression performed in \cite{bloebaum} is here implicitly done \emph{non-parametrically}, and as such with fewer structural assumptions.
For future reference, we denote the estimator \eqref{estimator} as the \emph{Mean Quadratic Variation (MQV)}. 

While we advocate a model-free approach, the above analysis does make \emph{some}
assumptions, which should be understood prior to drawing conclusions from data.
\begin{itemize}
	\item We assume the causal mechanism is \emph{continuous}.
	\item We assume $X$ has compact and connected support in order to bound
	the mesh of the partition generated by the sample. 
	\item We assume there exist a unique sorting of $x$, which is not the case if there are duplicated values. If duplicates are present, MQV can potentially --- by statistical anomaly --- sort the $y$ values and detect a signal which is not there.
\end{itemize}

We circumvent with the last two issues by resampling and perturbing the data.
Based on the given sample, we estimate the underlying probability distribution $\tilde{\mathbb{P}}_{(X,Y)}$, then resample the same amount of data points from this distribution and reevaluate MQV \eqref{estimator}. We repeat this procedure several times.
This approach both secures unique sorting and has an element of bootstrapping
that quantifies the sensitivity to unusual observations. 
This approach gives empirical distributions of both $C_{X\rightarrow Y}$ and $C_{Y\rightarrow X}$,
and we can then assert probabilities to the event $C_{X \rightarrow Y }>C_{Y\rightarrow X}$, and its mutual counterpart.
Algorithm~\ref{no-bijtest} summarize these ideas, and a practical realization is described in Sec.~\ref{experiments}.
\begin{algorithm}
	\caption{}\label{no-bijtest}
	\begin{algorithmic}[1]
		\State \textbf{Input} $N$ iid samples of $(X,Y)$.
		\State $\mu \gets $ Estimate the underlying probability distribution of $(X,Y)$.
		\For {$i$ from $1$ through $m$}
		\State $(\tilde{X},\tilde{Y}) \gets $ Sample $N$ points ($\tilde{x}$,$\tilde{y}$) from $\mu$.
		\State $Cx_i \gets C_{\tilde{X}\rightarrow\tilde{Y}}$; $Cy_i \gets C_{\tilde{Y}\rightarrow\tilde{X}}$ 
		\EndFor
		\State $p_x \gets \mu(Cx>Cy)$; $p_y \gets \mu(Cy>Cx)$
	\end{algorithmic}
\end{algorithm}

\subsection{Weak Identifiability}
The guarantees by \citet{bloebaum} apply to our approach too. We can, however, make some insights into when our approach is \emph{sensible}. First, we consider when it should \emph{not} be relied upon.
\begin{proposition}\label{prop:linear}
	Let $a,b,c,d\in\mathbb{R}$ and $a,c\neq0$. Assume $X$ and $Y$ are random variables with compact support. Then
	\begin{enumerate}
		\item[(1)] If we have that $\mathbb{E}[Y|X=x]= ax+ b$ and $\mathbb{E}[X|Y=y]=cy+d$, then $C_{X\rightarrow Y}=C_{Y\rightarrow X}$, in the limit of infinite data. 
		\item[(2)] We have $C_{aX+b \rightarrow cY + d}=C_{X\rightarrow Y}$.
	\end{enumerate}
\end{proposition}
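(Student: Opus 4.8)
The plan is to handle the two parts with different tools: part (1) is an asymptotic statement, so I would work entirely with the limiting value of the estimator established just after Theorem~\ref{maintheorem}, whereas part (2) is an exact finite-sample identity, so I would argue directly from the definition in Eq.~\eqref{estimator} by tracking how an affine map propagates through the estimator's pipeline.

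For part (1), recall that as $N\to\infty$ we have $C_{X\rightarrow Y}\to \text{Var}(\mathbb{E}[Y|X])/\text{Var}(Y)$ and, by the symmetric construction, $C_{Y\rightarrow X}\to \text{Var}(\mathbb{E}[X|Y])/\text{Var}(X)$; compactness of the support guarantees the finite second moments needed for these limits to be well-defined. The first step is to exploit linearity of the conditional means through the law of total covariance: substituting $\mathbb{E}[Y|X]=aX+b$ gives $\text{Cov}(X,Y)=\text{Cov}(X,\mathbb{E}[Y|X])=a\,\text{Var}(X)$, so $a=\text{Cov}(X,Y)/\text{Var}(X)$, and identically $c=\text{Cov}(X,Y)/\text{Var}(Y)$. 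The second step is to plug these slopes back in: $C_{X\rightarrow Y}\to a^2\text{Var}(X)/\text{Var}(Y)=\text{Cov}(X,Y)^2/(\text{Var}(X)\text{Var}(Y))=\text{Corr}(X,Y)^2$, and the same computation in the other direction yields $C_{Y\rightarrow X}\to\text{Corr}(X,Y)^2$, so the two limits coincide. The step I would watch most carefully is the slope identity: it is precisely linearity of the conditional expectation (not of the raw relationship) that forces the regression coefficient to be the symmetric quantity $\text{Cov}/\text{Var}$, and this symmetry is exactly what erases the directional asymmetry the method otherwise relies on.

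For part (2), set $X'=aX+b$ and $Y'=cY+d$ and trace each through Eq.~\eqref{estimator}. Rescaling $Y'$ to unit variance leaves every consecutive difference equal to $\operatorname{sign}(c)$ times the corresponding difference of the standardized $Y$-samples: the additive shift $d$ cancels in any difference $y'_{i+1}-y'_i$, while the magnitude $|c|$ is exactly undone by the unit-variance normalization, leaving only the sign of $c$. The $x$-values enter Eq.~\eqref{estimator} solely through the induced ordering, so sorting by $X'$ produces the same permutation as sorting by $X$ when $a>0$ and the reversed permutation when $a<0$. It then remains to observe that $\sum_i (z_{i+1}-z_i)^2$ is invariant both under a global sign change $z\mapsto -z$ and under reversal of the sequence, since squaring removes the sign of each consecutive difference. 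Applying these two invariances to the (possibly sign-flipped, possibly reversed) standardized sequence shows the sum is unchanged, hence $C_{aX+b\rightarrow cY+d}=C_{X\rightarrow Y}$ exactly, for every $N$. The only case needing care is $a<0$ or $c<0$, where the naive expectation that an affine map does nothing fails pointwise and is rescued only after squaring; verifying the reversal invariance of the difference sum is the crux of this part.
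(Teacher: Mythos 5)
Your proof is correct and follows essentially the same route as the paper's: for part (1) you reduce both limits to $\text{Corr}(X,Y)^2$ using the limiting form of the estimator (the paper does this in one line via $\text{Corr}(\mathbb{E}[Y|X],Y)^2=\text{Corr}(aX+b,Y)^2=\text{Corr}(X,Y)^2$, where your law-of-total-covariance computation of the slopes is just a more explicit version of the same fact), and for part (2) you use sorting invariance in $x$ plus standardization invariance in $y$, exactly as the paper does. If anything you are more careful than the paper, whose proof claims $\{x_i\}$ and $\{ax_i+b\}$ have the same sorting whenever $a\neq 0$ --- false for $a<0$, where the order is reversed --- a gap that your observation about invariance of $\sum_i(z_{i+1}-z_i)^2$ under sequence reversal and global sign change explicitly closes.
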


This tells us that when the relationship between $X$ and $Y$ is near linear, we
cannot make an informed decision. Note that the use of bootstrapping 
ensure that both decisions have low confidence, such that the user is at least
aware of the lack of identifiability. Although, the linear case is often ideal when considering structural equation models, it is not necessarily simpler in general.

In the noise-free setting, more formal statements can be made.
\begin{proposition}\label{prop:nonoise}
	If $X$ and $Y$ are random variables with compact support, and there exists
	measurable $f$ such that $Y=f(X)$, then $C_{X\rightarrow Y}\geq C_{Y\rightarrow X}$,
	in the limit of infinite data.
\end{proposition}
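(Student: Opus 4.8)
The plan is to pass from the finite-sample estimator to its infinite-data limit and then exploit the asymmetry that the noise-free hypothesis imposes on the two directions. Using Theorem~\ref{maintheorem} and the correlation identity derived immediately below it, for unit-variance observations one has
\[ C_{X\rightarrow Y}\to \mathrm{Corr}(\mathbb{E}[Y|X],Y)^2, \qquad C_{Y\rightarrow X}\to \mathrm{Corr}(\mathbb{E}[X|Y],X)^2 \]
as $N\to\infty$. The single fact that drives the argument is that $Y=f(X)$ makes $Y$ a deterministic function of $X$, so $\mathbb{E}[Y|X]=f(X)=Y$ almost surely and consequently $\mathrm{Var}(Y|X)=0$.

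For the causal direction I would substitute $\mathbb{E}[Y|X]=Y$ to get $\mathrm{Corr}(\mathbb{E}[Y|X],Y)^2=\mathrm{Corr}(Y,Y)^2=1$; equivalently, $\mathbb{E}[\mathrm{Var}(Y|X)]=0$ gives $C_{X\rightarrow Y}\to 1-\mathbb{E}[\mathrm{Var}(Y|X)]/\mathrm{Var}(Y)=1$. For the anti-causal direction no exact limit is needed: with the sample sorted by $y$, the estimator $C_{Y\rightarrow X}=1-\tfrac{1}{2(N-1)}\sum_i (x_{i+1}-x_i)^2$ is one minus a nonnegative quantity, hence $C_{Y\rightarrow X}\le 1$ for every $N$ and therefore in the limit; equivalently, a squared correlation never exceeds $1$. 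Combining the two bounds gives $C_{X\rightarrow Y}\to 1\ge C_{Y\rightarrow X}$, which is the claim.

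The step requiring care is invoking Theorem~\ref{maintheorem} in the causal direction, since its hypothesis asks that $\mathbb{E}[Y|X=x]=f(x)$ be continuously differentiable whereas the proposition only assumes $f$ measurable: a wildly oscillating or discontinuous $f$ can carry nonvanishing quadratic variation along the sorted sample, so that $\tfrac{1}{N-1}\sum_i(y_{i+1}-y_i)^2$ fails to converge to $2\mathbb{E}[\mathrm{Var}(Y|X)]=0$ and $C_{X\rightarrow Y}$ fails to approach $1$. I expect this to be the main obstacle, and I would close it with the standing continuity assumption on the causal mechanism rather than with Theorem~\ref{maintheorem}: if $f$ is continuous on the compact support $C$, then uniform continuity together with the almost-sure vanishing of the partition mesh (the ordered samples become dense in $C$) forces $\max_i|y_{i+1}-y_i|\to 0$, and since an average of nonnegative terms is bounded by their maximum,
\[ \frac{1}{N-1}\sum_{i=1}^{N-1}(y_{i+1}-y_i)^2 \;\le\; \Big(\max_i|y_{i+1}-y_i|\Big)^2 \longrightarrow 0, \]
so $C_{X\rightarrow Y}\to 1$ directly, bypassing the differentiability requirement. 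Only the causal direction needs this regularity; the anti-causal bound $C_{Y\rightarrow X}\le 1$ is unconditional.
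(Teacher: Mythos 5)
Your proof is correct and, at the top level, it follows the same strategy as the paper's own proof: show that $C_{X\rightarrow Y}\rightarrow 1$ in the noise-free direction and combine this with the unconditional bound $C_{Y\rightarrow X}\leq 1$ (which holds at every finite $N$, since the estimator is one minus a nonnegative average). The difference lies in how the key step is established. The paper's proof is a one-liner invoking Lemma~\ref{lem:1}, whose proof bounds the quadratic variation via $M=\sup_{x\in C}f'(x)$ and a monotone-sequence argument, and therefore needs $f$ to be \emph{continuously differentiable}. You instead derive the vanishing of $\frac{1}{N-1}\sum_i(y_{i+1}-y_i)^2$ from uniform continuity of $f$ on the compact support together with the almost-sure vanishing of the mesh of the ordered sample. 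This route is more elementary and strictly more general: it covers all continuous $f$, which exactly matches the paper's standing assumption that the causal mechanism is continuous, whereas Lemma~\ref{lem:1} does not (the paper's phrasing ``Lemma~\ref{lem:1} \emph{suggests}'' tacitly concedes the mismatch with the proposition's hypothesis of mere measurability). Two caveats. First, your mesh argument, like Lemma~\ref{lem:1}, implicitly needs the support of $X$ to be connected (an interval), which the proposition's statement omits; you share this fine print with the paper, so it is not a defect of your argument relative to theirs. Second, your diagnosis that a merely measurable $f$ can genuinely break the convergence is too pessimistic: since $\mathbb{E}Y^2<\infty$, Lusin's theorem yields a compact $K_\epsilon$ with $\mathbb{P}(X\notin K_\epsilon)<\epsilon$ on which $f$ is uniformly continuous; consecutive pairs with both endpoints in $K_\epsilon$ are handled exactly as in your argument, while each remaining pair contributes at most $2(y_i^2+y_{i+1}^2)$ and such pairs involve only a vanishing fraction of indices, so their averaged contribution is killed by uniform integrability of $Y^2$. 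Hence the proposition does hold as stated for measurable $f$; neither your proof nor the paper's establishes that full generality, but yours comes closer, and your identification of regularity of $f$ as the crux of the argument is exactly right.
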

This directly ties into the definition of causality, since if there is no noise
we have that $\mathbb{P}_{Y|X=x} = \mathbb{P}_{Y|\mathrm{do}(x)}$, such that
the correct causal decision will be taken.
Practically, this indicates, that we should make few incorrect decision in the
low-noise regime.
Notice that if $f$ is bijective, there exists a function $g=f^{-1}$ such that
$X=g(Y)$. Then $C_{X\rightarrow Y} = C_{Y\rightarrow X}$,
such that any taken decision will have low confidence. As before, bootstrapping
implies that the user is \emph{aware of this low confidence}.

%

\subsection{Reparametrization Invariance}\label{sec:repara}
\begin{figure}
	\centering
	\hfill\begin{minipage}{0.40\textwidth}
		\centering
		\includegraphics[width=\textwidth]{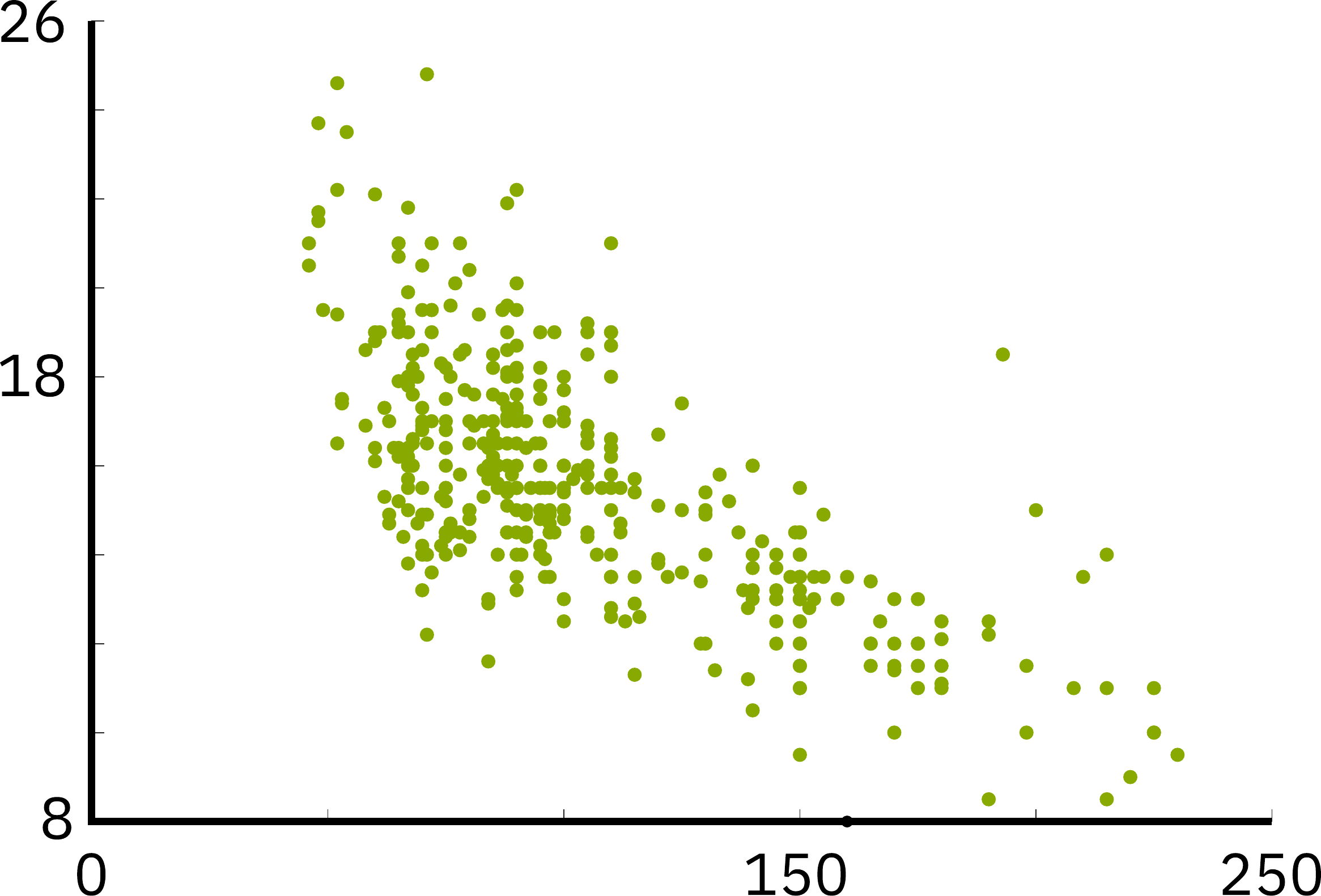} 
	\end{minipage}\hfill
	\hfill\begin{minipage}{0.40\textwidth}
		\centering
		\includegraphics[width=\textwidth]{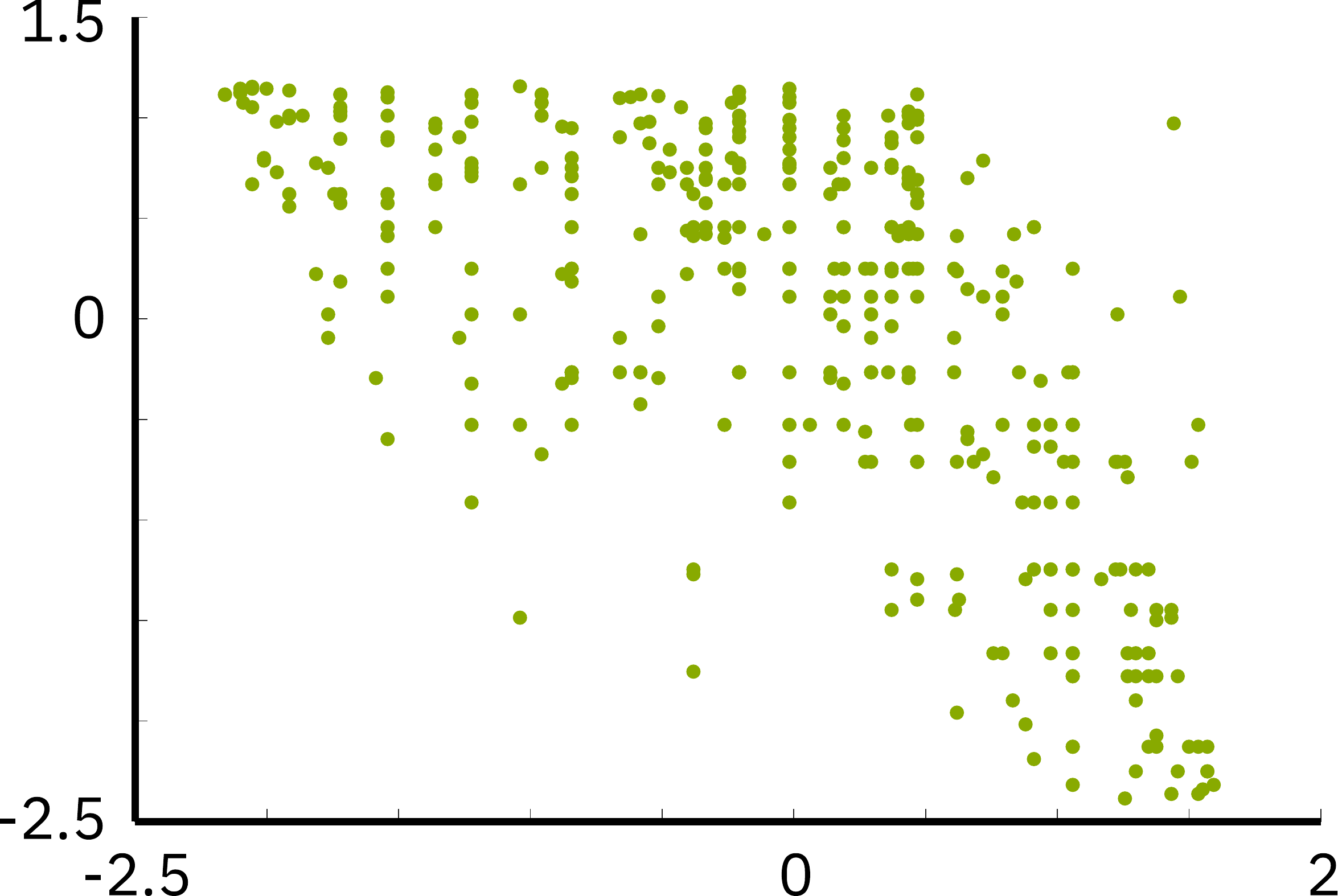}
	\end{minipage}\hfill
	\caption{\emph{Left:} a scatterplot of Horsepower $(x)$ vs. Acceleration $(y)$ from the Auto-MPG Dataset (pair0016 from CEP \cite{benchmark}).
		\emph{Right:} a bijective reparametrizations of the same. The invariance principle states, that the causal decisions taken, should be identical for these two datasets.}\label{repara}
\end{figure}
Principle~\ref{prin:inv} informs us that causal decisions should not rely on a specific parametrization of the observations; it is an intrinsic property of the system, rather than a property of the observation space in which we measure. One approach to extracting this intrinsic property is to consider a large number of different parametrizations, in order to be partially invariant to the particular choice of parametrization. Since MQV \eqref{estimator} itself is non-parametric, it is meaningful to evaluate $C_{X \rightarrow Y}$ under different parametrizations of the observed data. The simplicity of MQV \eqref{estimator}, thus, allow us to realize Principle~\ref{prin:inv}. We emphasize that this principle is truly causal, yet most model-based approaches, ANMs in particular, cannot aid its realization. Analogously, parametric approaches to regression-error based causal inference \cite{bloebaum} are not, in general, invariant to changes in marginal distributions.

We provide a straight-forward realization of the above
considerations: define a distribution over bijective reparametrizations $f, g$,
sample $C_{f(X) \rightarrow g(Y)}$, and infer a causal direction.
We postpone practical implementation details to Sec.~\ref{experiments} and supplementary materials.
%
\begin{algorithm}
	\caption{}\label{bij_test}
	\begin{algorithmic}[1]
		\State \textbf{Input} $N$ iid samples of $(X,Y)$. Positive integers $M$ and $m$.
		\For {$j$ from $1$ through $M$}
		\State Generate random bijection $f$ and $g$.	
		\State $\mu \gets $ Estimate the underlying probability distribution of $(f(X),g(Y))$.
		\For {$i$ from $1$ through $m$}
		\State $(\tilde{F},\tilde{G}) \gets $ Sample $N$ points $(\tilde{f}$,$\tilde{g})$ from $\mu$.
		\State $Cx_{ij} \gets C_{\tilde{F}\rightarrow\tilde{G}}$; $Cy_{ij} \gets C_{\tilde{G}\rightarrow\tilde{F}}$
		\EndFor
		\EndFor
		\State From samples $Cx_{ij}$ and $Cy_{ij}$ empirically evaluate $p_x = \mathbb{P}(C_X > C_Y)$ and $p_y= 1-p_x$.
	\end{algorithmic}
\end{algorithm}


In Fig.~\ref{repara} we illustrate the invariance principle. Naturally, the marginal distributions of $X$ and $Y$ changes dramatically, and we may think of Algorithm 2 as \emph{integrating out} the marginals. Principle~\ref{prin:inv} dictates that the causal link between $X$ and $Y$ is unaltered under these changes; our method then investigates if the estimator (Eq. \ref{estimator}) is too. If this is not the case, we may choose to say that our method cannot estimate a causal relationship. It is clear that any causal inference method based on distributional aspects of the observed is sensitive to these bijections. We empirically investigate this in Sec.~\ref{robustness}.

\subsection{Causal Confidence}
The proposed approach can be realized through sampling.
This imply that our approach naturally assigns probabilities $p_x$ and $p_y$
to each causal direction. From this, we can near-trivially define a \emph{confidence}, which allow us to rank decisions, as
\begin{equation}\label{rankheuristic}
\text{conf}(d):=|p_x(d)-0.5|.
\end{equation}
It is a feature of our approach, that the confidence in a decision is an \emph{integral} part of the decision itself. Notice $p_x$ reflect both statistical and model uncertainties, respectively thinking of the bootstrap and reparametrization considerations. 

\subsection{The Multivariate Generalization}
The main focus of this paper is the bivariate case, but the idea neatly generalize to the multivariate setting.
So far, we have looked at the variance process (more specifically the \emph{MQV}), but by the polarization identity \cite{durrett}, we can expand to triplets $(X,Y,Z)$ and see that the covariance conditioned on $Z$ is
\begin{equation}\label{covar}
\text{Cov}(X,Y)_Z \!:=\! \sum_{i=1}^{N-1}\!w_{i,i+1}\Bigg(\!\big(s_{i+1}-s_i\big)^2-(t_{i+1}-t_i)^2\!\Bigg),
\end{equation}
where $s_i=x_i+y_i$ and $t_i=x_i-y_i$ and $\sum_{i=1}^{N-1}w_{i,i+1}=\frac{1}{8}$.
Furthermore, the sorting is chosen such that $z_i\leq z_{i+1}$. This expression
is symmetric in $X$ and $Y$, but not in $Z$; and notice how Eq.~\ref{covar} in
its unaveraged version is exactly the covariance process from stochastic process
theory. Hence, we call it the \emph{mean co-quadratic variation}. We state the following Theorem without proof here, as it is analogous to Theorem~\ref{maintheorem}.
\begin{theorem}\label{covarthm}
	Let $(x_i,y_i,z_i)_{i=1,\ldots,N}$ be iid samples from $\mathbb{P}_{(X,Y,Z)}$, and assume that $Z$ has compact and connected support $C\subset\mathbb{R}$. Assume further that $\mathbb{E}[X|Z=z]$ and $\mathbb{E}[Y|Z=z]$ are both continuously differentiable over $C$. Define $s_i=x_i+y_i$ and $t_i=x_i-y_i$ for all $i=1,\ldots,N$. Then
	\begin{equation}
	\frac{1}{8(N-1)}\sum_{i=1}^{N-1}\Bigg(\big(s_{i+1}-s_i\big)^2-(t_{i+1}-t_i)^2\Bigg),
	\end{equation}
	tends to $\mathbb{E}\text{\emph{Cov}}(X,Y|Z)$ as $N\rightarrow\infty$.
\end{theorem}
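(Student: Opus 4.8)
The plan is to reduce Theorem~\ref{covarthm} to the univariate result of Theorem~\ref{maintheorem} via the polarization identity, so that essentially no new limiting argument is required. The key algebraic observation is that, writing $a_i = x_{i+1}-x_i$ and $b_i = y_{i+1}-y_i$, one has
\begin{equation}
(s_{i+1}-s_i)^2 - (t_{i+1}-t_i)^2 = (a_i+b_i)^2 - (a_i-b_i)^2 = 4a_ib_i,
\end{equation}
while, conditionally on $Z$, the matching identity $\mathrm{Cov}(X,Y|Z) = \tfrac14\big(\mathrm{Var}(X+Y|Z) - \mathrm{Var}(X-Y|Z)\big)$ holds pointwise in $z$, hence also in expectation. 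Thus the claim splits into two applications of Theorem~\ref{maintheorem}, one for the process $S=X+Y$ and one for $T=X-Y$, both sorted by the common conditioning variable $Z$.

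Concretely, I would first apply Theorem~\ref{maintheorem} with the roles $(X,Y)$ of that theorem played by $(Z,S)$: since the samples are sorted so that $z_i \le z_{i+1}$, the theorem gives
\begin{equation}
\frac{1}{N-1}\sum_{i=1}^{N-1}(s_{i+1}-s_i)^2 \longrightarrow 2\,\mathbb{E}\,\mathrm{Var}(X+Y|Z),
\end{equation}
and repeating the argument verbatim with $T$ in place of $S$ yields the companion limit with $\mathrm{Var}(X-Y|Z)$. Taking the difference of these two convergent sequences and dividing by $8$ then produces exactly $\tfrac14\big(\mathbb{E}\,\mathrm{Var}(X+Y|Z) - \mathbb{E}\,\mathrm{Var}(X-Y|Z)\big) = \mathbb{E}\,\mathrm{Cov}(X,Y|Z)$, which is the asserted limit.

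Before invoking Theorem~\ref{maintheorem} I would check its hypotheses for the two derived processes. By linearity of conditional expectation, $\mathbb{E}[S|Z{=}z] = \mathbb{E}[X|Z{=}z] + \mathbb{E}[Y|Z{=}z]$ and $\mathbb{E}[T|Z{=}z] = \mathbb{E}[X|Z{=}z] - \mathbb{E}[Y|Z{=}z]$ are both continuously differentiable over $C$, since each summand is assumed to be; the compact connected support of $Z$ is inherited directly; and square-integrability of $S$ and $T$, needed for $\mathbb{E}S^2,\mathbb{E}T^2<\infty$ and for the conditional (co)variances to be finite, follows from finite second moments of $X$ and $Y$, which I take to be in force.

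The only genuinely delicate point is the legitimacy of combining the two limits under a single ordering: both squared-increment sums are formed from the same $Z$-sorted sample, and it is precisely this shared ordering that lets the pointwise polarization identity transfer to the sums term by term. Since each sum converges individually by Theorem~\ref{maintheorem}, their linear combination converges to the corresponding combination of limits, independently of the mode of convergence, so no joint or uniform control beyond what Theorem~\ref{maintheorem} already supplies is needed. The main obstacle is therefore bookkeeping rather than analysis: ensuring the relabeling of the conditioning variable from $X$ to $Z$ is consistent and that the differentiability assumptions propagate correctly through the sum and difference, which linearity of conditional expectation guarantees.
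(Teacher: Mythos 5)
Your proof is correct. For comparison: the paper contains no written proof of this theorem at all --- it is stated ``without proof here, as it is analogous to Theorem~2.1,'' and the supplement proves only that theorem, the two propositions, and Lemma~B.1. The route the paper gestures at is a parallel re-derivation: redo the decomposition of Theorem~2.1 (conditional mean plus noise, Lemma~B.1 killing the squared mean-increments, the iid argument killing the cross terms) with covariance quantities in place of variances. You instead discharge the analogy by a genuine black-box reduction: apply Theorem~2.1 once to the pair $(Z,\,X+Y)$ and once to $(Z,\,X-Y)$, both under the common $Z$-sorting, and combine the resulting limits $2\mathbb{E}\mathrm{Var}(X+Y|Z)$ and $2\mathbb{E}\mathrm{Var}(X-Y|Z)$ via the conditional polarization identity $\mathrm{Cov}(X,Y|Z)=\tfrac14\bigl(\mathrm{Var}(X+Y|Z)-\mathrm{Var}(X-Y|Z)\bigr)$, giving $\tfrac18\bigl(2\mathbb{E}\mathrm{Var}(X+Y|Z)-2\mathbb{E}\mathrm{Var}(X-Y|Z)\bigr)=\mathbb{E}\mathrm{Cov}(X,Y|Z)$ exactly as required. (The paper does invoke polarization in the main text, but only as motivation for defining the co-quadratic variation, not as a proof device.) Your hypothesis transfer is the right kind of care and is where all the actual content of the reduction lies: linearity of conditional expectation preserves continuous differentiability of $\mathbb{E}[X\pm Y|Z=z]$, the compact connected support assumption concerns only the sorting variable $Z$, finite second moments of $X\pm Y$ follow from those of $X$ and $Y$ (which, as you note, must be taken as implicit --- the theorem statement omits them even though they are needed for the limit to be well defined), and since both sums are formed from the same ordering and each converges to a constant, the linear combination converges to the combined limit in whatever mode Theorem~2.1 delivers. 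What your reduction buys is that no new limiting argument is needed at all; what the paper's intended re-derivation would buy is self-containedness, at the cost of repeating the somewhat delicate cross-term analysis. Yours is arguably the cleaner way to make the paper's ``analogous'' claim precise.
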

By the law of total covariance, we have
\begin{equation}
\text{Cov}(X,Y)\!-\!\text{Cov}(\mathbb{E}[X|Z],\mathbb{E}[Y|Z])\!=\!\mathbb{E}[\text{Cov}(X,Y|Z)],
\end{equation}
implying that if Eq.~\ref{covar} is close to zero, then most of the covariation
between $X$ and $Y$ can be explained by $Z$. This indicates that $X$ and $Y$ might
be independent given $Z$. Of course, this is generally not a sufficient condition,
but it is necessary. The following statement gives sufficient conditions \cite{probess}.
\begin{theorem}\label{covarsuff}
	Two random variables $X$ and $Y$ are independent if and only if
	\begin{equation}
	\text{Cov}(f(X),g(Y))=0,
	\end{equation}
	for any pair of functions $f$ and $g$ that are bounded and continuous.
\end{theorem}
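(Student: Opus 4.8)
The plan is to prove the two implications separately, with the forward direction being immediate and the reverse direction carrying all the content. For the forward direction, I would observe that if $X$ and $Y$ are independent, then for any measurable maps the random variables $f(X)$ and $g(Y)$ are again independent; since $f$ and $g$ are bounded, all relevant expectations are finite and we have $\mathbb{E}[f(X)g(Y)]=\mathbb{E}[f(X)]\,\mathbb{E}[g(Y)]$, so that $\text{Cov}(f(X),g(Y))=0$ by definition. No continuity is even needed here.

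For the reverse direction, the strategy is to feed the hypothesis a cleverly chosen countable-parameter family of test functions rather than arbitrary ones, so as to reconstruct the joint characteristic function. The key observation is that $x\mapsto\cos(tx)$ and $x\mapsto\sin(tx)$ are bounded and continuous for each fixed $t\in\mathbb{R}$. Applying the hypothesis to the four pairs drawn from $\{\cos(t\,\cdot),\sin(t\,\cdot)\}$ and $\{\cos(s\,\cdot),\sin(s\,\cdot)\}$ yields four product identities for the mixed expectations, e.g. $\mathbb{E}[\cos(tX)\sin(sY)]=\mathbb{E}[\cos(tX)]\,\mathbb{E}[\sin(sY)]$. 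I would then expand $e^{i(tX+sY)}=(\cos tX+i\sin tX)(\cos sY+i\sin sY)$, take expectations term by term, and compare against the expansion of $\mathbb{E}[e^{itX}]\,\mathbb{E}[e^{isY}]$. The four covariance conditions make the real and imaginary parts of the two sides coincide, giving the factorization $\mathbb{E}[e^{i(tX+sY)}]=\mathbb{E}[e^{itX}]\,\mathbb{E}[e^{isY}]$ for every $t,s\in\mathbb{R}$.

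The concluding step invokes the uniqueness theorem for characteristic functions: a joint characteristic function that factors into the product of the marginal characteristic functions forces the joint law to equal the product of the marginal laws, which is precisely independence. I expect the main obstacle to be conceptual rather than computational, namely that the hypothesis quantifies only over \emph{real-valued} test functions, so one must be disciplined about splitting the complex exponentials into real and imaginary parts and checking that all four real covariance identities are genuinely available and genuinely needed. An alternative route would approximate the indicators $\mathds{1}_{(-\infty,a]}$ by bounded continuous functions and pass to the limit by dominated convergence to obtain factorization of the joint distribution function directly; this sidesteps characteristic functions but replaces the clean algebra above with an approximation argument, so I would favor the characteristic-function route.
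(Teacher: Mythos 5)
Your proof is correct, but note that the paper does not actually contain its own proof of this statement: it is quoted as a known result from the reference cited next to it \cite{probess}, and the supplementary material proves only Theorem~\ref{maintheorem}, the two propositions, and Lemma~\ref{lem:1}. So the comparison is against the standard textbook argument rather than anything in this paper. Your forward direction is the standard one and needs no comment. Your reverse direction --- testing the hypothesis on the bounded continuous functions $\cos(t\,\cdot)$, $\sin(t\,\cdot)$, $\cos(s\,\cdot)$, $\sin(s\,\cdot)$, combining the four resulting product identities to get $\mathbb{E}\big[e^{i(tX+sY)}\big]=\mathbb{E}\big[e^{itX}\big]\,\mathbb{E}\big[e^{isY}\big]$ for all $t,s$, and then invoking uniqueness of characteristic functions on $\mathbb{R}^2$ --- is complete and valid: the hypothesis does quantify over all bounded continuous pairs, so all four identities are available, and the concluding step is the standard fact that $\varphi_X(t)\varphi_Y(s)$ is the characteristic function of the product measure $\mathbb{P}_X\otimes\mathbb{P}_Y$, whence uniqueness forces $\mathbb{P}_{(X,Y)}=\mathbb{P}_X\otimes\mathbb{P}_Y$. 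The textbook route (which is also the alternative you sketch) instead approximates indicators $\mathbf{1}_{(-\infty,a]}$ pointwise by bounded continuous functions and passes to the limit by dominated convergence, factoring the joint distribution function directly; that argument is more elementary in that it uses no Fourier theory, but it requires some care in the approximation step, whereas your route outsources all the analytic difficulty to the uniqueness theorem. Either approach is a legitimate proof of the statement.
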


This sufficient condition allows for a simple conditional independence test:
transform the observed values with bounded continuous functions and check if
Eq.~\ref{covar} is zero for any such transformation. This naive test is exactly how we algorithmically realize Principle~\ref{prin:inv}, which illustrates that
our non-parametric estimators aligns with the fundamental ideas from \citet{pearl} and DAG estimation. 

\section{Experiments}\label{experiments}
We now evaluate the empirical behavior of the proposed model-free approach.
We consider ANM and IGCI as baseline methods, and report results first
on simulated data, and then on the real-world CEP benchmark dataset \cite{benchmark}.
For all comparisons below, when we state IGCI, we mean the slope-based estimator with uniform reference measure. For ANM, we applied the GP regression and the Hilbert-Schmidt Independence Criterion \cite{hsic}. Implementations are from the publicly available code given by \citet{benchmark}.
We rank our decisions based on the confidence score in Eq.~\ref{rankheuristic},
while ANM and IGCI come with their own confidence scores \cite{benchmark}.
Algorithmic details of our estimator are available in the supplementary material
alongside the associated source code.

We shall also compare to \emph{Regression Error-based Causal Inference} (RECI) from \citet{bloebaum}, as their approach is highly similar to the one presented here. This imply we have to choose a method of regression, and we regress by using the logistic-function class. This decision was made based on what we found were the overall best performance in their paper.

It was a motivation for us to present a non-parametric way of doing (implicit) regression, that would alleviate the need to pick a fair regression method for both causal and anti-causal direction. Further, the non-parametricity in our estimator is essential to apply bijections meaningfully.

Lastly, we performed a small experiment on the sensitivity of how the reparametrizations were sampled. The outline of this experiments was that as long as the bijections were \emph{diverse} enough, there is little sensitivity to the choice of distribution. 
\subsection{Simulated Pairs}\label{simres}
The data considered here is $100$ pairs, each consisting of $1000$ observations, simulated according to the procedure introduced by \citet{benchmark}; trying to mimic real-world data. There are four setups: the general simulated data (SIM), the data generated with low noise to the effect (SIM-ln), the data with one confounder present (SIM-c), and finally the data where the cause is Gaussian and the additive noise is too (SIM-G). 

As we have observed, we would expect our method to perform well at least on the low-noise data, as one would too for IGCI. The results for all datasets are visualized in Fig.~\ref{simpair}. This figure should be read from right to left as taking all decisions, we then sequentially discard the decisions we are most uncertain about. The 10 blue lines are outputs from Algorithm 2, indicating the inherent randomness in the decision-making. We see that our method outperforms IGCI in most cases, and is comparable to ANM. Equivalently, the cyan lines are the outputs from Algorithm 1 - that is, without bijections.

In all experiments, we note that our choice of ranking \eqref{rankheuristic} 
prefers easier decisions, which is evident from the \emph{concave} shape of the result curve. Note that uncertainty is larger for decisions that are considered difficult (low confidence), but the performance on high confidence decisions is generally better than both ANM and IGCI. From Fig.~\ref{simpair} it is visible that MQV reports: `\emph{I don't know}' when one blue line turns into many. At this crucial point MQV is consistently as good or better than ANM. 

Interestingly, if we take decisions with ANM in the same order as MQV, we obtain more preferable concave curves; in fact such that the performance resembles MQV on high confidence situations. We investigated this on all 4 datasets, and the concavity is visualized in Fig.~\ref{conc}. Here all decision are taken as determined by ANM, but ordered wrt. Eq.~\ref{rankheuristic}; the lines indicate the difference to the black lines of Fig.~\ref{simpair}. Hence the lines are bound to go through $(0,0)$ and $(100,0)$, and in between any positive number imply an improvement over ANM's own ranking.
In particular, we note that for decisions where our estimator is certain, we generally improve upon ANM's ranking.

The overall performance of RECI is shown in Table \ref{RECI-sim}. We see that, unsurprisingly, this is \emph{very} similar to our approach without bijections. Another key observation from Figure \ref{simpair} is that on the most 'nature-like' datasets, SIM and SIM-c, taking observations into account improves overall decision making. On SIM-G we conjecture there might be a bias in our estimator (without bijections), why it therefore might still be more 'safe' to include bijections. This conjecture is based on that the Gaussian is the maximum entropy distribution when mean and variance is known (standardized in our case), which might make the regression error of $Y\rightarrow$ X tend to be larger if $X$ is Gaussian, than if it was not. This sort of bias is alleviated by marginalizing marginal distributions (with bijections).
\begin{table}
\centering
\begin{tabular}{lcccc}
\hline
                     & SIM                    & SIM-c                  & SIM-ln                 & SIM-G                  \\ \hline
RECI                 & $64.1\pm 1.4$          & $63.2\pm 2.4$          & $\mathbf{83.1\pm 2.3}$ & $\mathbf{74.9\pm 3.9}$ \\
MQV (w/o bijections) & $62.2\pm 0.9$          & $63.4\pm 0.1$          & $\mathbf{82.7\pm 0.8}$ & $\mathbf{73.5\pm 0.8}$ \\
MGV (w/ bijections)  & $\mathbf{68.8\pm 2.1}$ & $\mathbf{65.9\pm 2.5}$ & $\mathbf{82.0\pm 3.5}$ & $61.5\pm 2.6$          \\ \hline
\end{tabular}
\caption{Average number of correct decisions and standard deviations over 10 runs. Bold marks statistically significantly best method. Only regression-error based methods are listed, as ANM and IGCI do not have error-bars; their performance can be read from Figure \ref{simpair}.}
\label{RECI-sim}
\end{table}
\begin{figure}
	\centering
	\hfill\begin{minipage}{0.5\textwidth}
		\centering
		\includegraphics[width=\textwidth]{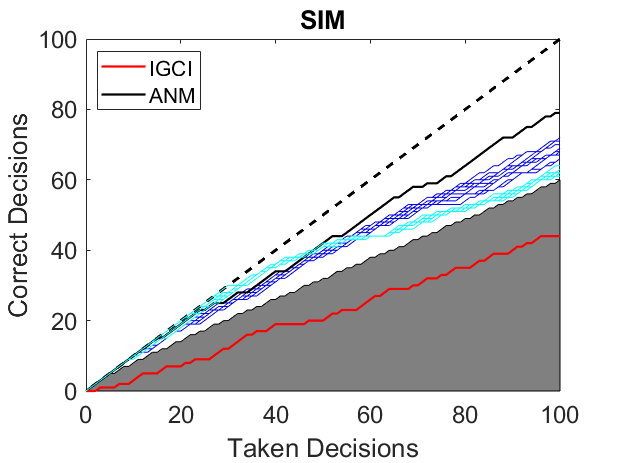} 
	\end{minipage}\hfill
	\hfill\begin{minipage}{0.5\textwidth}
		\centering
		\includegraphics[width=\textwidth]{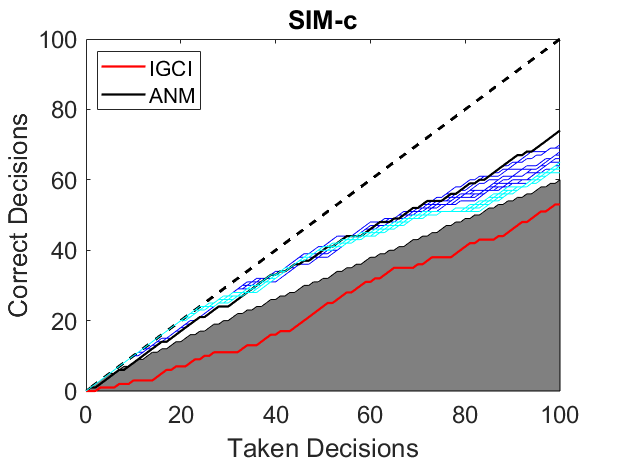} 
	\end{minipage}\hfill
	\hfill\begin{minipage}{0.5\textwidth}
		\centering
		\includegraphics[width=\textwidth]{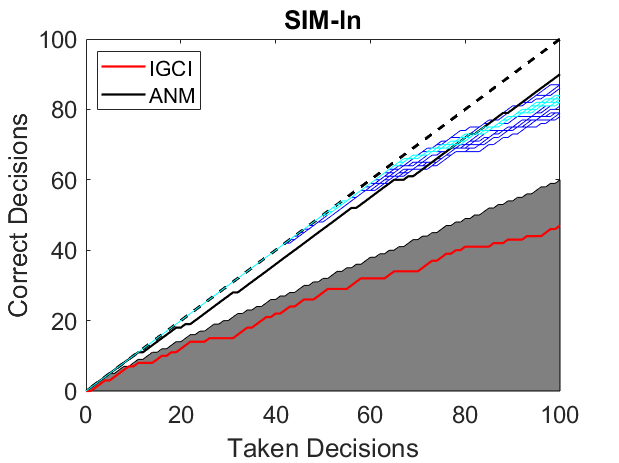} 
	\end{minipage}\hfill
	\hfill\begin{minipage}{0.5\textwidth}
		\centering
		\includegraphics[width=\textwidth]{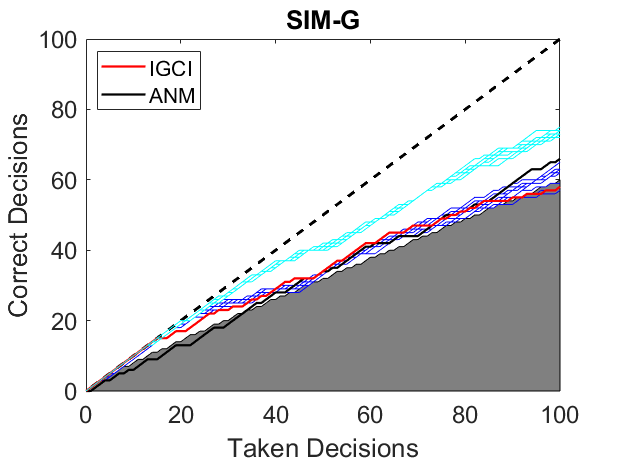} 
	\end{minipage}\hfill
	\caption{For the four different synthetic datasets, the blue lines (MQV with bijections) are the proportion of correct decisions where the decisions have been ranked according to the heuristic (\ref{rankheuristic}). The cyan lines are MQV without bijections. ANM and IGCI have other confidence scores \cite{benchmark}. The shaded area  is what falls below the 0.975 quantile of a binomial distribution with $p=0.5$.}\label{simpair}
\end{figure}
\begin{figure}
	\centering
	\includegraphics[width=\textwidth]{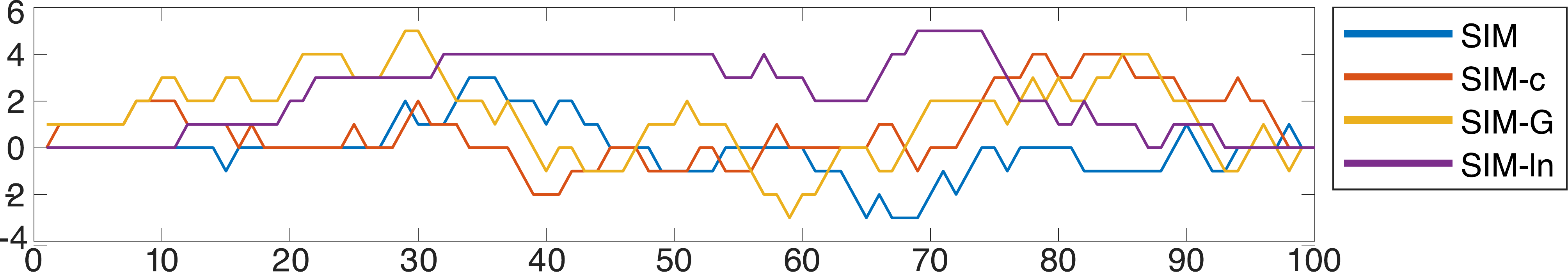} 
	\caption{Illustration of changes in concavity for ANM under changes in confidence score. Any curve above the constant line 0 imply more concave decision curve, hence the ranking we propose is better than the original \cite{benchmark}. Especially, concavity in the left-most side of the plot is important, as this reflect the most confident decisions.}\label{conc}
\end{figure}

\subsection{Real-World Data}
The \textsc{CauseEffectPairs (CEP)} database\footnote{https://webdav.tuebingen.mpg.de/cause-effect/ as it appeared in December 2019.} is currently $108$ datasets, of which $103$ are bivariate. It consists of real-world observations annotated with a causal direction \cite{benchmark}. As such, the $103$ pairs are not independent, as several originate from the same datasets, and to make up for this each pair has an associated weight. Our results on this dataset are plotted in Fig.~\ref{cep} and we see that we are comparable to other known methods when we \emph{integrate out} random bijections. The blue dots in the figure are our results for respectively Algorithm~\ref{no-bijtest} and \ref{bij_test} for 10 runs. Most runs for Algorithm~2 yielded accuracies in the range $0.62-0.64$, with one run having accuracy $0.66$ and two around $0.59$.
We see that Algorithm~2 is comparable to other known methods, while Algorithm~1
is subpar ($\sfrac{7}{10}$ runs had accuracy in $0.58-0.61$).
ANM yields an accuracy of $0.63$, and for IGCI $0.64$. Most importantly, this illustrates that Principle~\ref{prin:inv} is not hollow talk, since marginalizing bijections seem to significantly improve performance.

Over 10 runs the RECI method provide accuracies on the range of $[ 0.46, 0.62]$, averaging at $0.53$. This is worse that our approach, even without bijections, and not significantly better than random guessing.
\subsection{The Multivariate Generalization}
\begin{figure}
  \centering
	\includegraphics[width=0.37\textwidth]{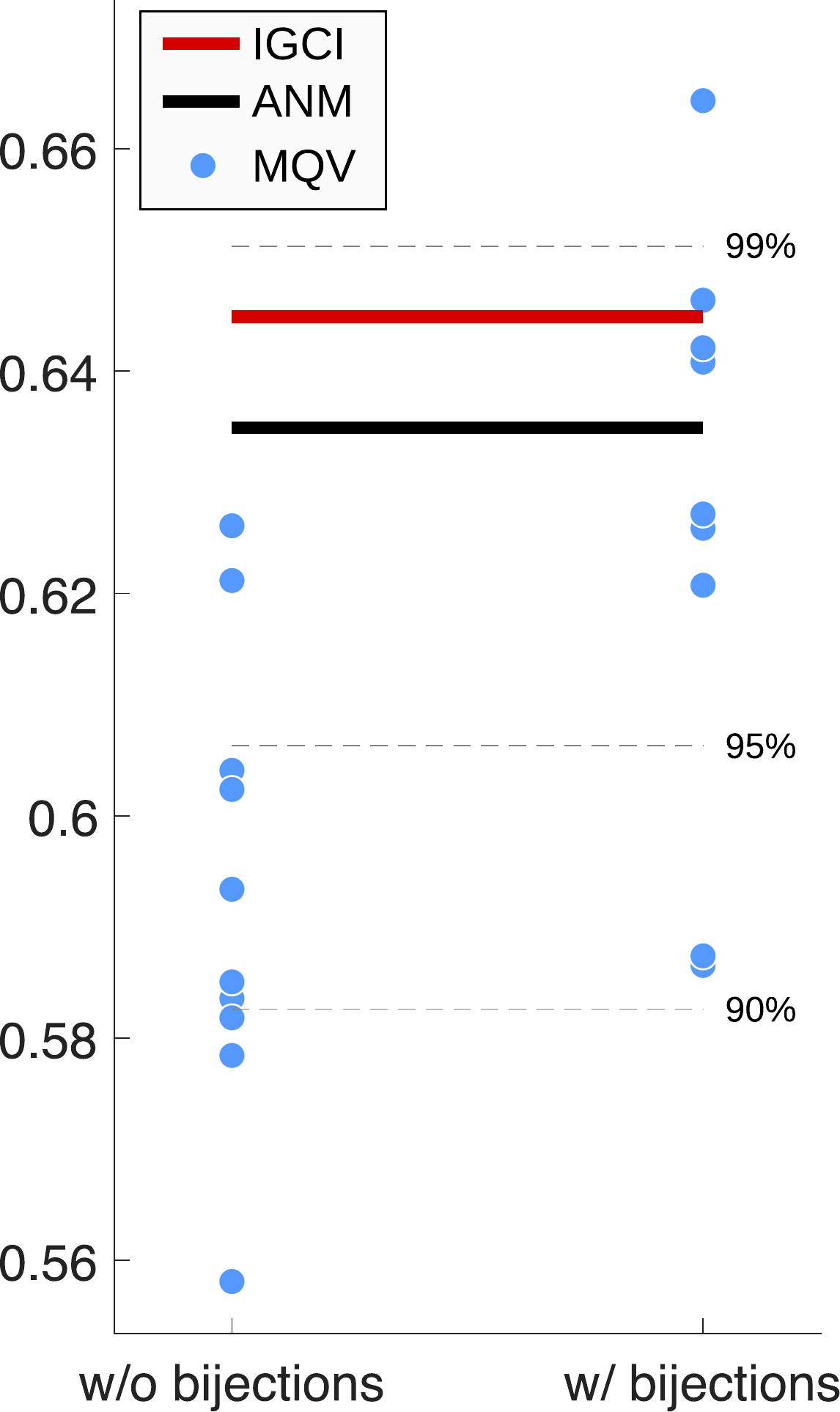}
	\caption{Performance on the \textsc{CEP}-benchmark. The blue dots (MQV) illustrate the inherent randomness in the algorithms. The dashed grey lines are quantiles had we tossed a fair coin for each decision. The average performance of RECI is $0.53$; below the plotted window.}\label{cep}
\end{figure}

\begin{figure}[t]
	\centering
	\vspace{-30mm}
	\begin{minipage}{0.5\textwidth}
		\centering
		\includegraphics[width=\textwidth]{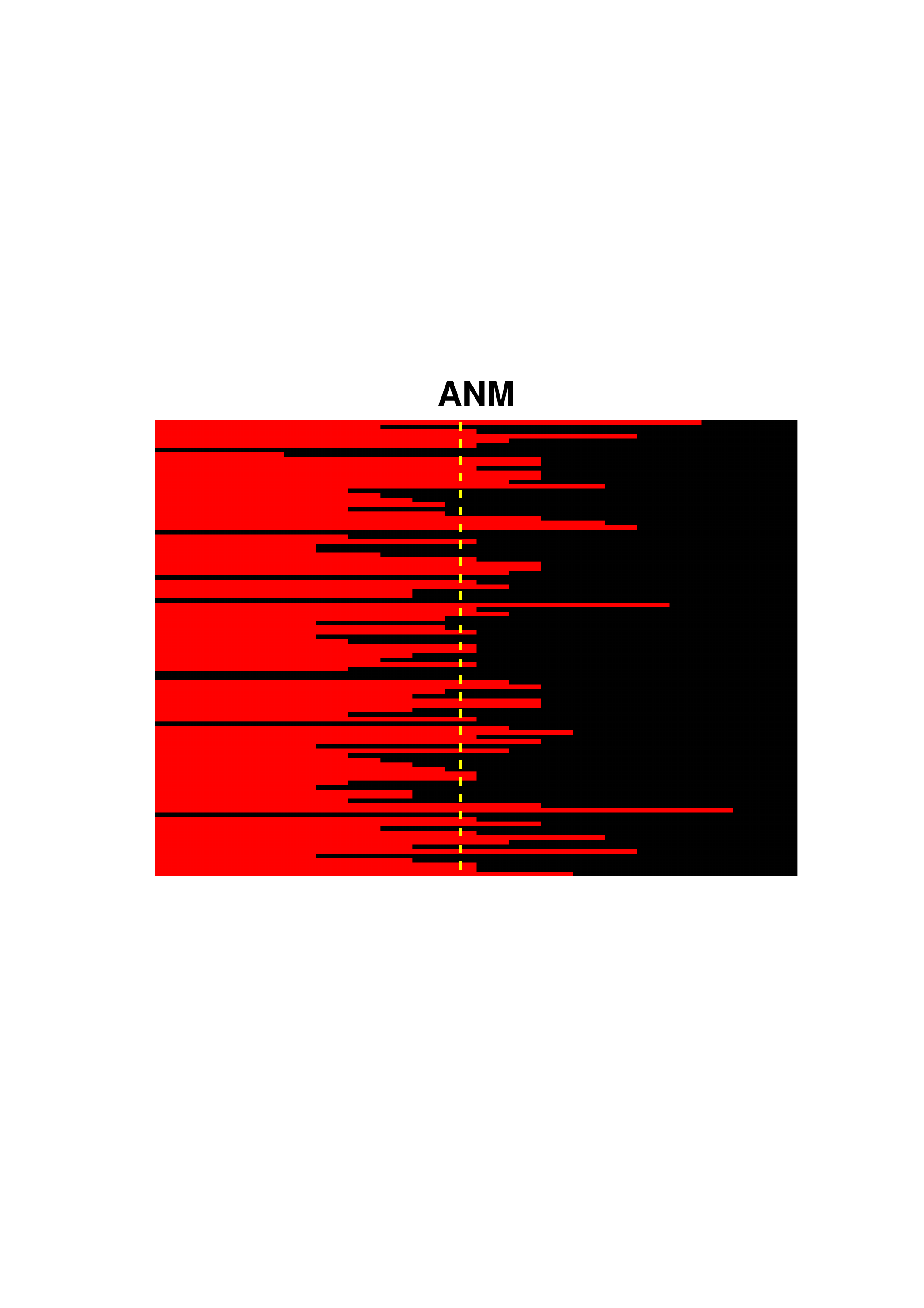} 
	\end{minipage}
	\begin{minipage}{0.5\textwidth}
		\centering
		\includegraphics[width=\textwidth]{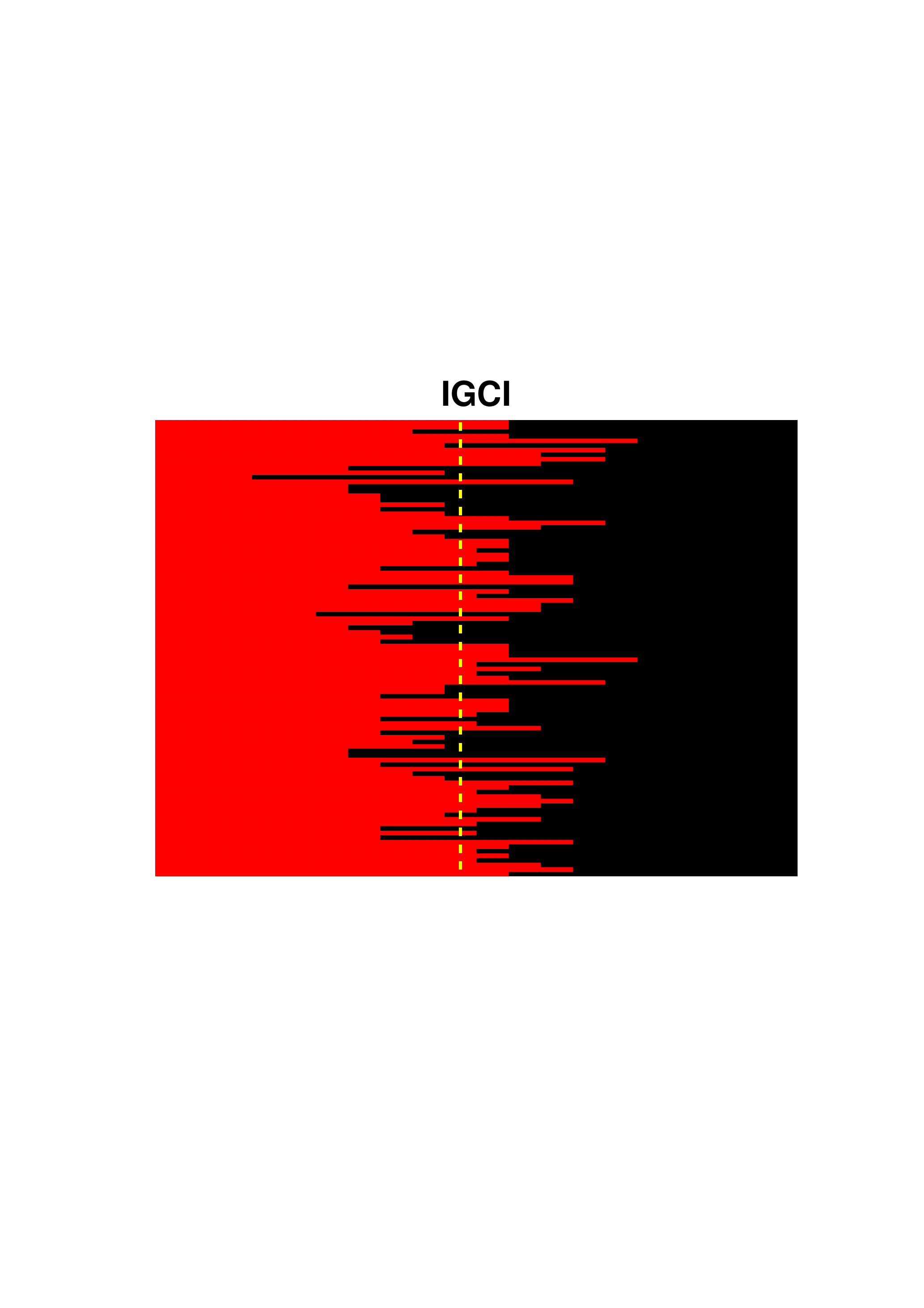} 
	\end{minipage}\hfill\vspace{-43mm}
	\begin{minipage}{0.5\textwidth}
		\centering
		\includegraphics[width=\textwidth]{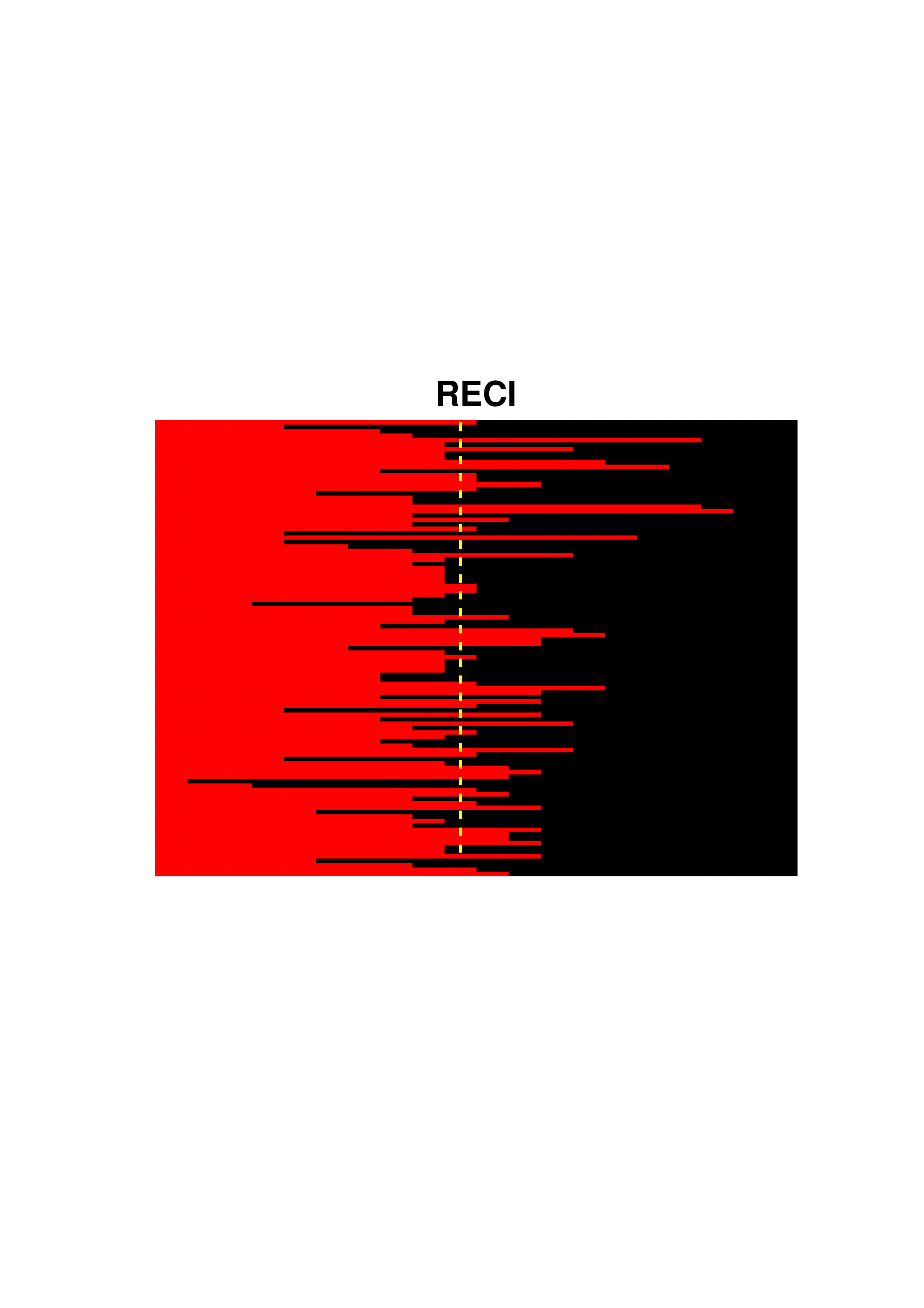} 
	\end{minipage}
	\begin{minipage}{0.5\textwidth}
		\centering
		\includegraphics[width=\textwidth]{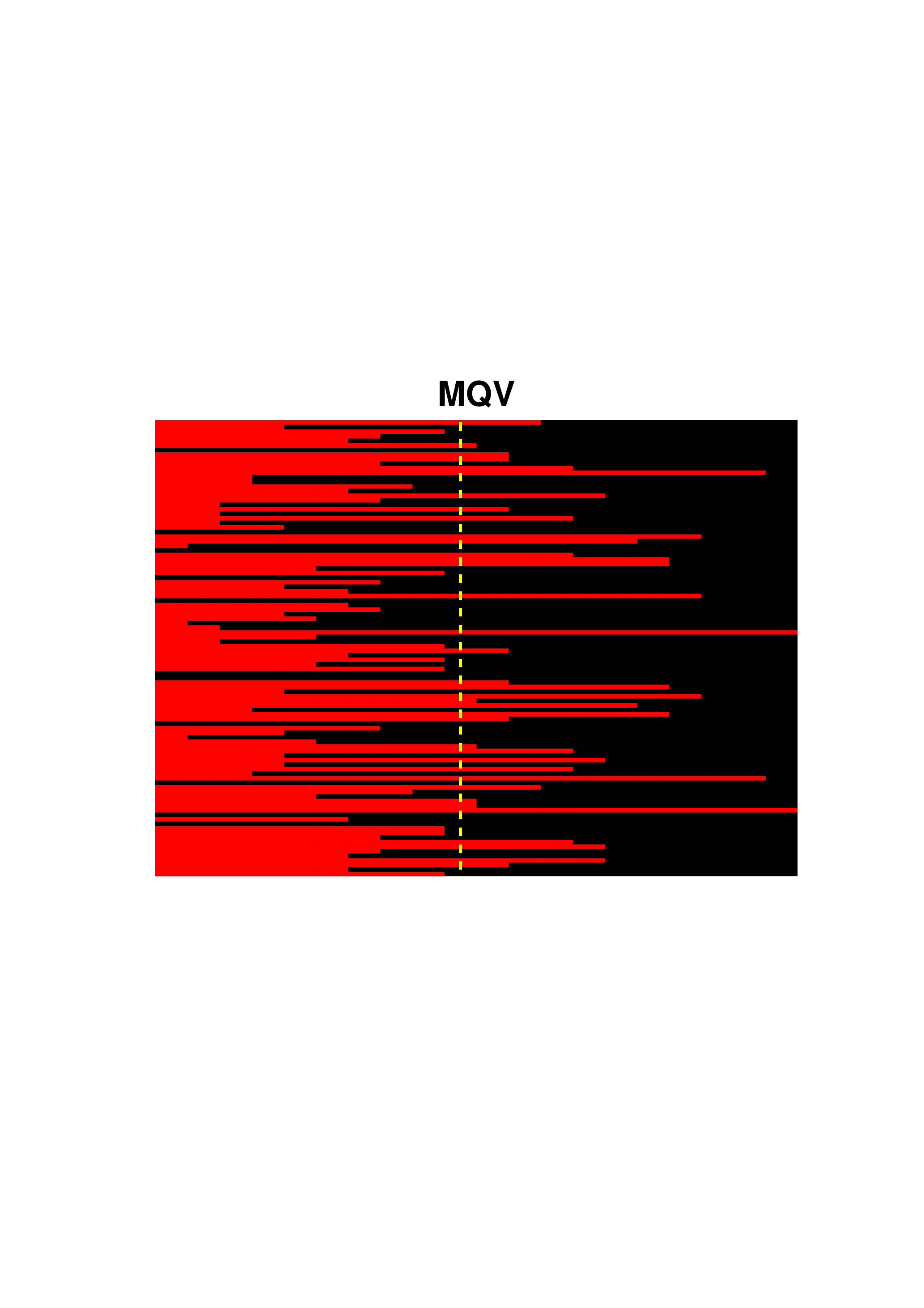} 
	\end{minipage}
	\vspace{-29mm}
	\caption{On the 100 pairs (rows) from SIM, we applied 20 random bijections (columns). Above illustrates how the bijections influenced the decision. Red is an incorrect decision. Table~\ref{entrop} quantitatively summarize the plots of this figure. We observe that MQV has 'fuller' bars, indicating that decisions are less influenced by bijections.}\label{sensitivity}
\end{figure}

We empirically illustrate the generalization to triplets $(X,Y,Z)$ on data generated similar to that in Sec.~\ref{simres}. We generate 100 DAGs of the type $X\leftarrow Z\rightarrow Y$, and a further 100 DAGs similar but with an added edge, either $X\rightarrow Y$ or $Y\rightarrow X$. For clarity, this means that we should find 100 times that $X\independent Y|Z$ and 100 times not. We construct a test that transform the variables $X$ and $Y$ with bijections, and test whether any such transformation make their absolute conditional covariance $\text{Cov}(f(X),g(Y))_Z$ (calculated as Eq. \eqref{covar}) exceed some threshold. In the following we have set this threshold to $0.15$, and rejected the hypothesis of independence if more than 1\% of the samples go above this.

The results in Table~\ref{condind} exemplify that (co-)quadratic variation is not misplaced in the causal framework, knowing that causality and conditional independence testing have been closely related for decades \cite{pearl}. We further notice that Theorem~\ref{covarthm} assumes $Z$ to be one-dimensional, but this extend to higher dimensions if one just finds a \emph{sorting} in this space. Keep in mind that, if the mesh tends to zero, then the convergence from above is still assured. For practical considerations one would then find a permutation of $z_i$, $i=1,\ldots,N$ such that $\max_{i=1,\ldots,N-1}\|z_{i+1}-z_i\|_2$ is as small as possible. This is a non-trivial problem for higher dimensions than 1. Suggestions here could be to use some kernel methods \cite{onvarest}, or some ranking on data manifolds \cite{rankonmanifolds}.
\begin{table}
	\centering
	\begin{tabular}{l|ll|l}
		& Positives & Negatives & Total \\ \hline
		True  & $99$         & $1$           & $100$     \\
		False & $3$          & $97$          & $100$    
	\end{tabular}
	\caption{Conditional independence results on synthetic data.}
	\label{condind}
\end{table}

\subsection{Robustness}\label{robustness}
We evaluate how restrictive model assumptions are for the invariance principle; more specifically we measure how robust the different causal inference methods are if we transform $X$ and $Y$ with bijections. This gives an indication of which methods align with Principle~\ref{prin:inv}. Fig.~\ref{sensitivity} has in each row one pair from the dataset SIM. To each of these pairs, we applied 20 random bijections and kept track of the decisions made. Black and red are respectively incorrect and correct decisions. 
We can see that MQV is more robust to Principle~\ref{prin:inv}, as we have more \emph{full bars} (or near full) along the rows, implying the decisions are less likely to be altered by the bijections. In fairness it should be stated that the bijections are not identical in-between plots. We may quantify this sensitivity with the entropy, i.e. for each pair evaluate $-d_1\log d_1 - d_2\log d_2$, where $d_1$ and $d_2$ are the fraction of times decision $X\rightarrow Y$ and $Y\rightarrow X$ were made. In Table~\ref{entrop} the average entropy of causal decisions over all pairs in a data set is listed, which indicates how robust a method is to bijections: small entropy imply robustness. 

Table~\ref{entrop} provide evidence to the hypothesis that assuming a model is not robust under random bijections. Our method deals better with this.
\begin{table}
\centering
		\begin{tabular}{l|lllll}
     & SIM               & SIM-c             & SIM-ln            & SIM-G             & CEP               \\ \hline
ANM  & $0.5953$          & $0.5838$          & $0.5457$          & $0.5748$          & $0.5591$          \\
IGCI & $0.6620$          & $0.6718$          & $0.6692$          & $0.6655$          & $\textbf{0.1247}$ \\
RECI & $0.6097$          & $0.6092$          & $0.5902$          & $0.6066$          & $0.6416$          \\
MQV  & $\textbf{0.4895}$ & $\textbf{0.4461}$ & $\textbf{0.4906}$ & $\textbf{0.4376}$ & $0.4381$         
\end{tabular}
	\caption{The mean entropy of decisions under random bijections.}
	\label{entrop}
\end{table}
One naturally also observes a clear deviant in Table~\ref{entrop}, IGCI-decisions on \textsc{CEP} are nearly closed under bijections, and there must be some entity in the data explaining this. Following up on this, we introduce a \textit{strawman estimator}
\begin{equation}
S_{X\rightarrow Y}:= \frac{\# \text{ of unique values in }X}{\# \text{ of unique values in }Y},
\end{equation}
and infer $X\rightarrow Y$ if $S_{X\rightarrow Y}<1$. Evidently this measure is invariant if we biject $X$ and $Y$, but its relation to causal decision taking is not evident. On \textsc{CEP} this procedure takes the same decision as IGCI on $98$ out of $103$ pairs, and the strawman estimator alone has an accuracy around $0.57\!-\!0.61$ (in 3 cases $S_{X\rightarrow Y}=S_{Y\rightarrow X}$, and we flip a coin). Thus, we conjecture that the success IGCI has had on the \textsc{CEP}-Benchmark is a spurious correlation due to duplicated values in the data. This is supported by the fact that IGCI discard duplicated values.\looseness=-1 

\section{Discussion and conclusion}

We took a novel approach to  bivariate causal discovery, by imposing invariance on the causal domain rather than distributional assumptions. We did this by quantifying the regression errors in a non-parametric manner, which allowed for us to meaningfully take advantage of the proposed invariant principle (Principle~\ref{prin:inv}). We provide a thorough empirical analysis on the impact of this principle.

The results show that this approach is feasible and is competitive with the current methodologies, that impose structural model assumptions. We find both the theoretical and computational ease of the approach highly appealing. However, we do not consider the present work complete, and we hope that future work in the field will take into account that if causal models are closed under reparametrizations, then so should its estimators. The results show that the non-parametricity of the mean quadratic variation (MQV) is more robust under reparametrizations, and that taking this into account significantly improves performance. This insight also proposed an explanation for the good performance of IGCI \cite{igci} on real-world observations to which there has been previous speculation \cite{benchmark, igci}. 

Further, we have demonstrated that MQV extends to higher dimensions in ways that are similar to the traditional conditional independence tests used for estimating DAGs \cite{pearl}. 

Finally, the presented method pays high attention to the uncertainty of any causal estimation, which results in a confidence measure that outperforms the baselines and shows good promise of detecting when it seems feasible to do causal inference with purely observational data; a query which is much more fundamental than the inference itself.

\subsection*{Acknowledgements}
This project has received funding from the European Research Council (ERC)
under the European Union’s Horizon 2020 research and innovation programme (grant agreement no 757360). MJ and SH were supported in part by a research grant (15334) from VILLUM FONDEN.

\setcounter{section}{0}
\renewcommand{\thesection}{\Alph{section}}

	\section{Experimental details}
	
	For the experimental setup in the paper, we here give the explicit and detailed description. See Algorithm \ref{no-bijtest} and \ref{bij_test} for notational help.
	
	For our own method, we consistently use $m\!=\!300$ and $M\!=\!100$ (see Algorithm~\ref{bij_test}),
	meaning we generate $100$ random bijections for each pair, and for each of these
	we subsample $300$ times, c.f.\ Sec. 2.
	We estimate the underlying probability distribution by Gaussian kernel density
	estimation, with Silverman's rule of thumb for bandwidth \cite{silverman}. This
	is a crude estimator for many pairs, but we leave it to future work to optimize
	this procedure of the algorithm; and to fairly compare on all pairs we choose it throughout.
	
	From a practical perspective we note that by Proposition~\ref{prop:linear} we may restrict this search to strictly increasing functions.
	
	\textbf{Generation random increasing functions} was done with the following setup: draw $\gamma$ from an inverse Gamma distribution with both shape and scale parameters set to 5. Generate a Gaussian Process (GP) $f$ with zero mean and covariance function $k(x,x')=\exp(-\frac{1}{2\gamma}\|x-x'\|^2_2)$. Then let $f(x_0):=\min_{x\in \text{supp}(X)} f(x)$ and set
	\begin{equation}\label{eq:monotone}
	F(x):= f(x_0) + \int_X (f(x)-f(x_0))\mathrm{d}x,
	\end{equation}
	then $F$ is an increasing function. 
	
	Based on a sample $(C_X,C_Y)$, we estimate
	\begin{align}
	p_x=\frac{1}{(mM)^2}\sum_{i=1}^{mM}\sum_{j=1}^{mM}\textbf{1}\{c_{y_j}\!<\!c_{x_i}\}.
	\end{align}
	We note that, since the GP has zero mean, its integral \eqref{eq:monotone} has a linear mean function.
	
	We introduce a \textit{confidence} in each decision, and this heuristic is near trivial when both algorithms return a probability $p_x$ (we set $p_y=1-p_x$). Thus we define confidence of a decision $d$ as
	\begin{equation}
	\text{conf}(d):=|p_x(d)-0.5|.
	\end{equation}
	We rank our decisions based on this: the higher the confidence, the more we believe in our decision. ANM and IGCI have other confidence scores \cite{benchmark}.
	
	When we state IGCI, we mean the slope-based estimator with uniform reference measure. For ANM, we applied the GP regression and the Hilbert-Schmidt Independence Criterion \cite{hsic}. Implementations are from the publicly available code given by \citet{benchmark}.  

	\section{Proof of Theorems}
	\renewcommand{\thesection}{\arabic{section}}
	\setcounter{section}{2}
	\begin{theorem}
		Let $X$ have support on a compact and connected subset $C$ of $\mathbb{R}$,
		and assume that $\mathbb{E}[Y|X=x]$ is a continuous differentiable function
		over $C$. Assume $\mathbb{E}Y^2 <\infty$. Let further $(x_i,y_i)$, $i=1,\ldots,N$,
		be iid samples from $\mathbb{P}_{(X,Y)}$. If we order, such that $x_{i+1}\geq x_i$ for all $i=1,\ldots,N-1$, then it holds that
		\begin{align}
		\frac{1}{N-1}\sum_{i=1}^{N-1}(y_{i+1}-y_{i})^2\rightarrow 2\mathbb{E}\textrm{\emph{Var}}(Y|X),
		\end{align}
		as $N\rightarrow \infty$.
	\end{theorem}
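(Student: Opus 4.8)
The plan is to treat the sum as a difference-based variance estimator from nonparametric regression. Write $m(x) := \mathbb{E}[Y \mid X = x]$ and decompose each sample as $y_i = m(x_i) + \varepsilon_i$, where $\varepsilon_i := y_i - m(x_i)$ satisfies $\mathbb{E}[\varepsilon_i \mid x_i] = 0$ and $\mathbb{E}[\varepsilon_i^2 \mid x_i] = \text{Var}(Y\mid X = x_i) =: \sigma^2(x_i)$. Throughout I write $x_{(i)}$ for the sorted abscissae (assuming no ties, which the paper handles separately) and $\varepsilon_{(i)}$ for the correspondingly reordered residuals. Expanding the square gives
\begin{equation*}
(y_{(i+1)} - y_{(i)})^2 = \big(m(x_{(i+1)}) - m(x_{(i)})\big)^2 + 2\big(m(x_{(i+1)}) - m(x_{(i)})\big)\big(\varepsilon_{(i+1)} - \varepsilon_{(i)}\big) + \big(\varepsilon_{(i+1)} - \varepsilon_{(i)}\big)^2,
\end{equation*}
which I will call the signal, cross, and noise terms, and handle in turn.

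For the signal term I would use that $C$ is a bounded closed interval and $m \in C^1(C)$ is Lipschitz with some constant $L$; hence $\big(m(x_{(i+1)}) - m(x_{(i)})\big)^2 \le L^2 (x_{(i+1)} - x_{(i)})^2 \le L^2\, S_N\, (x_{(i+1)} - x_{(i)})$, where $S_N$ is the mesh of the sorted sample. Summing telescopes the gaps to at most the length of $C$, giving $\sum_i \big(m(x_{(i+1)}) - m(x_{(i)})\big)^2 \le L^2 S_N |C|$. Because $X$ has full support on the compact connected set $C$, the mesh $S_N \to 0$ almost surely, so even the unnormalised signal sum vanishes, and a fortiori its $1/(N-1)$-average.

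The noise term carries the limit. Conditioning on the entire vector of abscissae, the sorting permutation is a deterministic function of these abscissae alone, so the reordered residuals $\varepsilon_{(i)}$ remain conditionally independent across $i$ with conditional mean zero and conditional second moment $\sigma^2(x_{(i)})$. Hence the product of distinct residuals drops out and $\mathbb{E}\big[(\varepsilon_{(i+1)} - \varepsilon_{(i)})^2 \mid X\big] = \sigma^2(x_{(i+1)}) + \sigma^2(x_{(i)})$, whose average equals, up to two boundary terms of order $1/N$, the quantity $\tfrac{2}{N}\sum_i \sigma^2(x_i) \to 2\,\mathbb{E}[\sigma^2(X)] = 2\,\mathbb{E}\,\text{Var}(Y\mid X)$ by the strong law of large numbers (here $\mathbb{E}[\sigma^2(X)] = \mathbb{E}[\varepsilon^2] < \infty$ follows from $\mathbb{E}Y^2 < \infty$ and boundedness of $m$ on $C$). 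It then remains to show the random sum concentrates on this conditional mean: writing $W_i := \varepsilon_{(i+1)}\varepsilon_{(i)}$, conditional independence makes every off-diagonal covariance $\mathbb{E}[W_iW_j\mid X]$ with $i\neq j$ vanish, so the conditional variance of $\tfrac{1}{N-1}\sum_i W_i$ is $\tfrac{1}{(N-1)^2}\sum_i \sigma^2(x_{(i+1)})\sigma^2(x_{(i)})$, together with the analogous control of $\tfrac{1}{N-1}\sum_i(\varepsilon_{(i)}^2 - \sigma^2(x_{(i)}))$.

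Finally, the cross term is dispatched by Cauchy--Schwarz: its average is bounded in absolute value by $\big(\tfrac{1}{N-1}\sum_i (m(x_{(i+1)}) - m(x_{(i)}))^2\big)^{1/2}\big(\tfrac{1}{N-1}\sum_i(\varepsilon_{(i+1)} - \varepsilon_{(i)})^2\big)^{1/2}$, the first factor tending to $0$ by the signal estimate and the second staying bounded by the noise estimate, so the product vanishes. The step I expect to be the real obstacle is the concentration of the noise term: one must argue carefully that sorting by $X$ preserves the conditional independence and mean-zero structure of the residuals, and the variance bound $\tfrac{1}{(N-1)^2}\sum_i \sigma^2(x_{(i+1)})\sigma^2(x_{(i)}) \to 0$ is only immediate when $\sigma^2$ is bounded; under the bare assumption $\mathbb{E}Y^2 < \infty$ one likely needs a truncation of the residuals (or a mild strengthening such as $\mathbb{E}[\sigma^2(X)^2] < \infty$) to make the fluctuation control fully rigorous.
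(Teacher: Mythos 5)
Your proposal follows essentially the same route as the paper's proof: the identical decomposition $y_i = m(x_i)+\varepsilon_i$ into signal, cross, and noise terms, with the signal sum killed by Lipschitz continuity on the compact support (the paper's Lemma~A.1, proved there via a bounded monotone sequence rather than your mesh-to-zero argument), the cross term by Cauchy--Schwarz, and the noise term carrying the limit $2\,\mathbb{E}\,\text{Var}(Y|X)$ via the law of large numbers. The only substantive remark: the concentration difficulty you flag for the diagonal part is avoidable without truncation or extra moments, since $\sum_i\bigl(\varepsilon_{(i)}^2-\sigma^2(x_{(i)})\bigr)$ is invariant under the sorting permutation and so the plain iid strong law applies under $\mathbb{E}Y^2<\infty$ (this is exactly how the paper proceeds), whereas the sorted cross-product term $\sum_i \varepsilon_{(i)}\varepsilon_{(i+1)}$ is the genuine subtlety --- one the paper itself dismisses with only a brief appeal to the iid assumption, so your explicit treatment there is if anything more careful than the original.
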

	\begin{proof}
		Let $f(x)\!:=\!\mathbb{E}[Y|X\!=\!x]$, and decompose for all $i$
		\begin{equation}
		y_i = f(x_i) + \big(y_i - f(x_i)\big)=:f(x_i) + \epsilon_i.
		\end{equation}
		Then we see that
		\begin{align}
		\sum_{i=1}^{N-1}\big(y_{i+1}-y_i\big)^2=&  \sum_{i=1}^{N-1}\big(f(x_{i+1})-f(x_{i})\big)^2 \nonumber \\ & + \sum_{i=1}^{N-1} \big(\epsilon_{i+1}-\epsilon_{i}\big)^2\\ & +2\sum_{i=1}^{N-1}\big(\epsilon_{i+1} -\epsilon_{i}\big)\big(f(x_{i+1})-f(x_i)\big),\nonumber
		\end{align}
		where the first and last terms tend to zero when scaled with $1/(N-1)$ due to Lemma~\ref{lem:1} (below) and the Cauchy-Schwartz inequality. Thus we are left with
		\begin{equation}
		\sum_{i=1}^{N-1} \big(\epsilon_{i+1}-\epsilon_{i}\big)^2 = \sum_{i=1}^{N-1}\epsilon_{i}^2 + \sum_{i=2}^{N}\epsilon_{i}^2 - 2\sum_{i=1}^{N-1}\epsilon_{i}\epsilon_{i+1},
		\end{equation}
		and the last term vanishes due to the iid assumption\footnote{Recall a sequence of iid variables, is still iid under any permutation.} and the fact that $\mathbb{E}\epsilon_{i} = 0$ for all $i$. Hence, as $N \rightarrow \infty$,
		\begin{align*}
		\frac{1}{N-1}&\sum_{i=1}^{N-1}\Big(y_i^2 + f(x_i)^2 - 2y_if(x_i)\Big)\\&\rightarrow \text{Var}(Y)+\text{Var}(\mathbb{E}[Y|X]) - 2\text{Cov}(Y,\mathbb{E}[Y|X])\\&=\text{Var}(Y)-\text{Var}(\mathbb{E}[Y|X])
		\\&=\mathbb{E}[\text{Var}(Y|X)],
		\end{align*}
		by the law of total variance\footnote{We assumed without loss of generality that $\mathbb{E}Y = 0$.}.
	\end{proof}

	\begin{proposition}
		Let $a,b,c,d\in\mathbb{R}$ and $a,c\neq0$. Assume $X$ and $Y$ are random variables with compact support. Then
		\begin{enumerate}
			\item[(1)] If we have that $\mathbb{E}[Y|X=x]= ax+ b$ and $\mathbb{E}[X|Y=y]=cy+d$, then $C_{X\rightarrow Y}=C_{Y\rightarrow X}$, in the limit of infinite data. 
			\item[(2)] We have $C_{aX+b \rightarrow cY + d}=C_{X\rightarrow Y}$.
		\end{enumerate}
	\end{proposition}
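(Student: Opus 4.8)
The plan is to reduce both statements to the limiting identity established immediately after Theorem~\ref{maintheorem}, namely that for unit-variance data
\[
C_{X\rightarrow Y}\;\xrightarrow[N\to\infty]{}\;\frac{\mathrm{Var}(\mathbb{E}[Y\mid X])}{\mathrm{Var}(Y)}=\mathrm{Corr}\big(\mathbb{E}[Y\mid X],\,Y\big)^2 ,
\]
together with its mirror image for $C_{Y\rightarrow X}$. The one algebraic fact I will lean on is $\mathrm{Cov}(\mathbb{E}[Y\mid X],Y)=\mathrm{Var}(\mathbb{E}[Y\mid X])$, which follows from the tower property via $\mathbb{E}[\mathbb{E}[Y\mid X]\,Y]=\mathbb{E}[\mathbb{E}[Y\mid X]^2]$; this is precisely what collapses the correlation into the variance ratio above.

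For part (1) I would first observe that linearity of the conditional mean pins down the covariance: $\mathbb{E}[Y\mid X]=aX+b$ gives $\mathrm{Cov}(X,Y)=\mathrm{Cov}(X,\mathbb{E}[Y\mid X])=a\,\mathrm{Var}(X)$, and symmetrically $\mathrm{Cov}(X,Y)=c\,\mathrm{Var}(Y)$. The cleanest route is then to note that, because $\mathbb{E}[Y\mid X]=aX+b$ is an affine function of $X$ and squared correlation is invariant under affine reparametrization (the sign is killed by squaring), $\mathrm{Corr}(\mathbb{E}[Y\mid X],Y)^2=\mathrm{Corr}(X,Y)^2$. Running the identical argument in reverse gives $C_{Y\rightarrow X}\to\mathrm{Corr}(Y,X)^2$, and since $\mathrm{Corr}(X,Y)^2$ is symmetric the two limits coincide. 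Equivalently, one can substitute $\mathrm{Var}(\mathbb{E}[Y\mid X])=a^2\mathrm{Var}(X)$ and the two covariance identities straight into the variance ratio; both directions collapse to $\mathrm{Cov}(X,Y)^2/(\mathrm{Var}(X)\mathrm{Var}(Y))$. I should check that Theorem~\ref{maintheorem}'s hypotheses hold in both directions, but $ax+b$ and $cy+d$ are continuously differentiable and the supports are compact by assumption, so both limits are licensed.

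For part (2) I would argue at the level of the estimator itself, since the standardization built into the MQV procedure makes the statement exact rather than merely asymptotic. Writing $X'=aX+b$, standardizing $X'$ returns $\mathrm{sign}(a)$ times the standardization of $X$, and likewise for $Y'=cY+d$. Thus after the mandatory unit-variance rescaling the transformed data differs from the original only by possible global sign flips on each axis. A sign flip on the $X$-axis merely reverses the sorting order, which leaves $\sum_i (y_{i+1}-y_i)^2$ untouched because each summand is a squared difference, while a sign flip on the $Y$-axis multiplies every increment by $-1$ and disappears upon squaring. Hence $C_{aX+b\rightarrow cY+d}$ equals $C_{X\rightarrow Y}$ term by term. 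As a sanity check I would also re-derive this in the limit: the $\sigma$-algebras generated by $X$ and by $aX+b$ coincide (the map is a bijection), so $\mathbb{E}[cY+d\mid aX+b]=c\,\mathbb{E}[Y\mid X]+d$, and affine-invariance of the squared correlation again yields the claim.

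The calculations are routine; the only place demanding care is the bookkeeping of signs and sorting in part (2). One might worry that the $Y$-scaling factor $c$ survives — each increment picks up a factor $c^2$ — so it is essential to make explicit that standardization to unit variance is part of the definition of $C$; without it, part (2) would only hold for $|c|=1$. Once that is stated, the sign of $a$, which governs whether the sort is ascending or descending, is harmless because squared increments are permutation-invariant, and the proof closes.
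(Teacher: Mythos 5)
Your proof is correct and takes essentially the same route as the paper: part (1) passes to the limit $C_{X\rightarrow Y}\rightarrow \mathrm{Corr}(\mathbb{E}[Y|X],Y)^2 = \mathrm{Corr}(aX+b,Y)^2=\mathrm{Corr}(X,Y)^2$ and invokes symmetry, while part (2) argues that the sorting induced by $aX+b$ gives the same sum of squared increments and that standardization absorbs the affine map in $Y$. If anything, your handling of negative $a$ and $c$ (reversed sorting leaves $\sum_i(y_{i+1}-y_i)^2$ unchanged; sign flips die upon squaring) is more careful than the paper's proof, which simply asserts that $\{x_i\}$ and $\{ax_i+b\}$ "have the same sorting."
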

	\begin{proof}
		\emph{Ad (1):} 
		\begin{align*}
		C_{X\rightarrow Y}&\rightarrow \text{Corr}(\mathbb{E}[Y|X],Y)^2
		= \text{Corr}(aX+b,Y)^2 = \text{Corr}(X,Y)^2,
		\end{align*}
		and completely analogous for $C_{Y\rightarrow X}$.
		
		\emph{Ad (2):} 
		Clearly $\{x_i\}_{i=1,\ldots,N}$ and $\{ax_i + b\}_{i=1,\ldots,N}$ have the same sorting when $a\neq 0$.
		$C_{X\rightarrow Y}$ is obviously invariant to scaling and translating in $Y$, since we standardize the variable.
	\end{proof}
	\begin{proposition}
		If $X$ and $Y$ are random variables with compact support, and there exists
		measurable $f$ such that $Y=f(X)$, then $C_{X\rightarrow Y}\geq C_{Y_\rightarrow X}$,
		in the limit of infinite data.
	\end{proposition}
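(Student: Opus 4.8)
The plan is to reduce both $C_{X\rightarrow Y}$ and $C_{Y\rightarrow X}$ to their limiting values and then exploit the degeneracy of the conditional variance in the noise-free regime. Recall from the derivation following Theorem~\ref{maintheorem} that, after standardizing to unit variance, $C_{X\rightarrow Y}\rightarrow\text{Corr}(\mathbb{E}[Y|X],Y)^2$ and, symmetrically, $C_{Y\rightarrow X}\rightarrow\text{Corr}(\mathbb{E}[X|Y],X)^2$ as $N\rightarrow\infty$. So it suffices to compare these two squared correlations.

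The key observation is that when $Y=f(X)$, the variable $Y$ is a deterministic function of $X$, hence $\mathbb{E}[Y|X]=f(X)=Y$ almost surely. Substituting this into the limit gives $\text{Corr}(\mathbb{E}[Y|X],Y)=\text{Corr}(Y,Y)=1$, so $C_{X\rightarrow Y}\rightarrow 1$. Equivalently, one may phrase this through the conditional variance: since $Y$ is determined by $X$ we have $\text{Var}(Y|X)=0$ almost surely, so $\mathbb{E}[\text{Var}(Y|X)]=0$ and the term subtracted from $1$ in Eq.~\eqref{estimator} vanishes in the limit.

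For the anti-causal direction I would simply note that a squared correlation never exceeds one, so $C_{Y\rightarrow X}\rightarrow\text{Corr}(\mathbb{E}[X|Y],X)^2\leq 1$. Alternatively, and avoiding any appeal to regularity of $\mathbb{E}[X|Y]$, the bound $C_{Y\rightarrow X}\leq 1$ holds already at finite $N$ because the quantity subtracted from $1$ in Eq.~\eqref{estimator} is a sum of squares and hence non-negative; taking $N\rightarrow\infty$ preserves the inequality. Combining the two conclusions yields $C_{X\rightarrow Y}=1\geq C_{Y\rightarrow X}$, which is the claim.

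The main obstacle I anticipate is the gap between the hypotheses: Theorem~\ref{maintheorem} requires $\mathbb{E}[Y|X=x]$ to be continuously differentiable, whereas here $f$ is only assumed measurable, so the clean convergence $\frac{1}{N-1}\sum_{i=1}^{N-1}(y_{i+1}-y_i)^2\rightarrow 2\mathbb{E}[\text{Var}(Y|X)]$ is not immediately licensed. The safe route is to lean on the two facts that survive this gap, namely $\mathbb{E}[\text{Var}(Y|X)]=0$ in the forward direction and the trivial finite-$N$ bound $C_{Y\rightarrow X}\leq 1$ in the reverse direction, rather than on the full correlation identity; one should additionally assume $\text{Var}(Y)>0$ so that the normalization, and hence the correlation, is well defined.
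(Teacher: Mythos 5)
Your core argument coincides with the paper's. The paper's proof is one line: in the noise-free case, Lemma~\ref{lem:1} applied to $f$ gives $\frac{1}{N-1}\sum_{i=1}^{N-1}\big(f(x_{(i+1)})-f(x_{(i)})\big)^2\rightarrow 0$, hence $C_{X\rightarrow Y}\rightarrow 1$, and the claim follows from the bound $C_{Y\rightarrow X}\leq 1$, which the paper leaves implicit and you state explicitly via the finite-$N$ sum-of-squares observation. Your reformulation through $\mathbb{E}[Y|X]=Y$, $\mathrm{Corr}(\mathbb{E}[Y|X],Y)=1$, or equivalently $\mathbb{E}[\mathrm{Var}(Y|X)]=0$, is just the limit of Theorem~\ref{maintheorem} restated, so up to packaging this is the same proof.

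Where you genuinely depart from the paper is in claiming that your ``safe route'' closes the measurability gap, and that claim does not hold up. You are right that the proposition assumes only measurable $f$ while Theorem~\ref{maintheorem} and Lemma~\ref{lem:1} require continuous differentiability; the paper glosses over this (its proof only says Lemma~\ref{lem:1} ``suggests'' the conclusion). But $\mathbb{E}[\mathrm{Var}(Y|X)]=0$ is a population identity, whereas $C_{X\rightarrow Y}$ is an empirical sum of squared consecutive differences of $f$, and the only bridge between the two is precisely the convergence result whose smoothness hypothesis you are trying to avoid. Nothing in your argument forces $\frac{1}{N-1}\sum_{i}\big(f(x_{(i+1)})-f(x_{(i)})\big)^2\rightarrow 0$ for merely measurable $f$: that is exactly what Lemma~\ref{lem:1} proves, and its proof uses $\sup_{x\in C}|f'(x)|$. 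So your proof is valid under the same implicit regularity assumption as the paper's; the difference is that the paper tacitly accepts this, while you assert you have circumvented it. The honest fixes are either to add the smoothness hypothesis to the statement, or to supply a genuinely new argument for rough $f$ --- for instance one based on $L^2$-continuity of translations, $\|f(\cdot+h)-f\|_{L^2}\rightarrow 0$ as $h\rightarrow 0$, combined with the $O_p(1/N)$ spacings of the order statistics --- neither of which appears in your proposal.
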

	\begin{proof}
		If there is no noise, then Lemma \ref{lem:1} suggests that $C_{X\rightarrow Y} \rightarrow 1$, which concludes the assertion in the limit.
	\end{proof}
	\renewcommand{\thesection}{\Alph{section}}
	\setcounter{theorem}{0}
	\begin{lemma} \label{lem:1}
		Let $X$ be a random variable with support on a compact and connected set $C\subset \mathbb{R}$ and let $f: \mathbb{R}\rightarrow\mathbb{R}$ be a continuously differentiable function over $C$. Let $x_i$ be independent samples of $X$ for $i=1,\ldots,N$. Then
		\begin{equation}
		\frac{1}{N-1}\sum_{i=1}^{N-1}\Big(f(x_{i+1})-f(x_{i})\Big)^2 \rightarrow 0\quad \text{as } N\rightarrow\infty,
		\end{equation}
		where $x_{i+1}\geq x_i$ for all $i$.
	\end{lemma}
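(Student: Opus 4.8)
The plan is to reduce the statement to a fact about the \emph{mesh} of the sorted sample and then show the mesh vanishes. First I would use compactness: a compact connected subset of $\mathbb{R}$ is a closed interval $C=[a,b]$, and since $f$ is continuously differentiable on $C$, the quantity $L:=\sup_{x\in C}|f'(x)|$ is finite. By the mean value theorem $f$ is therefore $L$-Lipschitz on $C$, giving the pointwise bound $\big(f(x_{i+1})-f(x_i)\big)^2\leq L^2(x_{i+1}-x_i)^2$ for every consecutive pair of ordered samples.

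Next I would collapse the sum of squared spacings using the ordering. Writing $\delta_N:=\max_{1\leq i\leq N-1}(x_{i+1}-x_i)$ for the mesh and using that the spacings are nonnegative and telescope,
\[
\sum_{i=1}^{N-1}(x_{i+1}-x_i)^2 \;\leq\; \delta_N\sum_{i=1}^{N-1}(x_{i+1}-x_i) \;=\; \delta_N\,(x_N-x_1) \;\leq\; (b-a)\,\delta_N .
\]
Combining this with the Lipschitz bound yields $\sum_{i=1}^{N-1}\big(f(x_{i+1})-f(x_i)\big)^2\leq L^2(b-a)\,\delta_N$, so even the \emph{unnormalized} sum is controlled by the mesh, and dividing by $N-1$ only helps. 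Hence it suffices to show $\delta_N\to 0$.

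The hard part, and the whole reason connectedness is assumed, is proving $\delta_N\to 0$ almost surely. Since the support of $X$ is all of $[a,b]$, every subinterval of positive length has positive probability. Fixing $\epsilon>0$, I would partition $[a,b]$ into finitely many cells of length at most $\epsilon$; each cell has positive probability $p_j>0$, so the events ``the $n$-th sample lands in cell $j$'' are iid with $\sum_n p_j=\infty$, and the second Borel--Cantelli lemma gives that almost surely each cell is eventually occupied. Once every cell contains a sample, no gap $(x_i,x_{i+1})$ between consecutive order statistics can have length exceeding $2\epsilon$ (a gap longer than $2\epsilon$ would contain an entire cell, contradicting that the cell holds a sample). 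Thus $\limsup_N\delta_N\leq 2\epsilon$ almost surely, and letting $\epsilon\downarrow 0$ along a countable sequence gives $\delta_N\to 0$ a.s.

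Putting the pieces together,
\[
0\;\leq\;\frac{1}{N-1}\sum_{i=1}^{N-1}\big(f(x_{i+1})-f(x_i)\big)^2\;\leq\;\frac{L^2(b-a)}{N-1}\,\delta_N\;\to\;0,
\]
which proves the lemma. I would remark that connectedness cannot be dropped: on a disconnected support a fixed-size gap persists, so $\delta_N$ need not vanish, matching the paper's motivation for imposing compact connected support precisely to bound the mesh. (The bound $\delta_N\leq b-a$ additionally makes the summand bounded, so the convergence upgrades to $L^1$ by dominated convergence if a non-almost-sure mode is preferred.)
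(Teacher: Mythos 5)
Your proof is correct, but its engine is genuinely different from the paper's. Both arguments open identically: $C=[a,b]$ by compactness and connectedness, the $C^1$ hypothesis yields a Lipschitz constant, and everything reduces to controlling the sum of squared gaps $\sum_{i=1}^{N-1}(x_{i+1}-x_i)^2$ of the sorted sample. From there the paper stays entirely deterministic: it augments the gap sum with boundary terms $(x_{(1)}-a)^2+(b-x_{(N)})^2$ and shows the resulting quantity is non-increasing in $N$ (a new sample splits some gap, and $(u+v)^2\geq u^2+v^2$ for $u,v\geq 0$), hence it is bounded by its value at $N=2$, and dividing by $N-1$ forces the limit to zero --- with no reference to the distribution of $X$ beyond the samples lying in $[a,b]$. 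You instead bound the gap sum by $\delta_N(x_N-x_1)\leq(b-a)\delta_N$ via telescoping and then prove $\delta_N\to 0$ almost surely by the second Borel--Cantelli lemma, which is where full support and connectedness enter. Your route buys a strictly stronger conclusion: the \emph{unnormalized} sum $\sum_{i=1}^{N-1}\big(f(x_{i+1})-f(x_i)\big)^2$ tends to zero a.s., the genuine vanishing-quadratic-variation statement, together with a sharp account of which assumptions drive it; the paper's route buys sure (not merely almost-sure) convergence at an explicit $O(1/N)$ rate, valid for arbitrary point sequences in $[a,b]$. Two caveats. First, as you yourself half-observe (``dividing by $N-1$ only helps''), your intermediate bound already settles the lemma with no probability at all, since $\delta_N\leq b-a$ trivially gives a bound of $L^2(b-a)^2/(N-1)\to 0$; the Borel--Cantelli work is needed only for the stronger unnormalized claim. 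Second, and for the same reason, your closing remark should be read as a statement about the mesh rather than about the lemma: on a support consisting of finitely many disjoint intervals, $\delta_N$ indeed does not vanish, but only boundedly many consecutive pairs straddle the gaps, each contributing $O(1)$, so the \emph{normalized} sum still tends to zero after division by $N-1$.
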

	\begin{proof}
		For notation, we use $x_{(1)}\leq x_{(2)}\leq\ldots\leq x_{(N)}$ for the sorted sample.
		We denote 
		\begin{equation*}
		K_N = \frac{1}{N-1}\sum_{i=1}^{N-1}\Big(f(x_{(i+1)})-f(x_{(i)})\Big)^2.
		\end{equation*}
		Since $f$ is continuously differentiable, there exists $M:=\sup_{x\in C}f'(x)$, and by compactness there exists $a,b\in\mathbb{R}$, $a\leq b$, such that $C=[a,b]$. Then the bound, for any $N\geq 2$
		\begin{align}\label{unscaled}
		\begin{split}
		S_N:=M^2\Bigg((x_{(1)}-a)^2 + (b-x_{(N)})^2 + \sum_{i=1}^{N-1}\Big(x_{(i+1)}-x_{(i)}\Big)^2\Bigg)\geq K_N.
		\end{split}
		\end{align}
		Hence it suffices to show that for any $\epsilon>0$ there exists $N_0$, such that for all $N>N_0$, we have $S_N<\epsilon$. Naturally $S_N$ is downwards bounded by $0$, thus we may show that $S_N$ is a strictly descending sequence. See that for any fixed $N$ we have that $x_{N+1}\in [a,b]$, either $x_{N+1}\in[a,x_{(1)})$, $x_{N+1}\in[x_{(N)},b]$ or there exists some $j=1,\ldots,N-1$ such that $x_{N+1}\in[x_{(j)},x_{(j+1)})$. For the last case it holds that
		\begin{equation*}
		(x_{(j+1)}-x_{(j)})^2\geq (x_{(j+1)}-x_{N+1})^2+(x_{N+1}-x_{(j)})^2,
		\end{equation*}
		and the cases $a\leq x_{N+1}<x_{(1)}$ and $x_{(N)}\leq x_{N+1}\leq b$ follows analogously. This shows that $S_N>S_{N+1}$. Now scale $S_N$ with $\frac{1}{N-1}$ and observe that (\ref{unscaled}) still holds, hence $0\leq K_N\leq \frac{S_N}{N-1}\leq \frac{S_2}{N-1}\rightarrow 0$, and the assertion follows.
	\end{proof}

\bibliography{sources}

\begin{thebibliography}{19}
\providecommand{\natexlab}[1]{#1}
\providecommand{\url}[1]{\texttt{#1}}
\expandafter\ifx\csname urlstyle\endcsname\relax
  \providecommand{\doi}[1]{doi: #1}\else
  \providecommand{\doi}{doi: \begingroup \urlstyle{rm}\Url}\fi

\bibitem[Bl{\"o}baum et~al.(2018)Bl{\"o}baum, Janzing, Washio, Shimizu, and
  Sch{\"o}lkopf]{bloebaum}
Patrick Bl{\"o}baum, Dominik Janzing, Takashi Washio, Shohei Shimizu, and
  Bernhard Sch{\"o}lkopf.
\newblock Cause-effect inference by comparing regression errors.
\newblock In \emph{International Conference on Artificial Intelligence and
  Statistics}, pages 900--909, 2018.

\bibitem[Pearl(2009)]{pearl}
Judea Pearl.
\newblock \emph{Causality: models, reasoning, and inference}.
\newblock Cambridge University Press, 2009.

\bibitem[Hoyer et~al.(2009)Hoyer, Janzing, Mooij, Peters, and
  Sch\"{o}lkopf]{nonlinear}
Patrik~O. Hoyer, Dominik Janzing, Joris~M Mooij, Jonas Peters, and Bernhard
  Sch\"{o}lkopf.
\newblock Nonlinear causal discovery with additive noise models.
\newblock In D.~Koller, D.~Schuurmans, Y.~Bengio, and L.~Bottou, editors,
  \emph{Advances in Neural Information Processing Systems 21}, pages 689--696.
  Curran Associates, Inc., 2009.

\bibitem[Shimizu et~al.(2006)Shimizu, Hoyer, Hyvärinen, and Kerminen]{lingam}
Shohei Shimizu, Patrik~O. Hoyer, Aapo Hyvärinen, and Antti Kerminen.
\newblock A linear non-gaussian acyclic model for causal discovery.
\newblock \emph{Journal of Machine Learning Research}, 7:\penalty0 2003--2030,
  2006.

\bibitem[Zhang and Hyvärinen(2010)]{pnl}
Kun Zhang and Aapo Hyvärinen.
\newblock Distinguishing causes from effects using nonlinear acyclic causal
  models.
\newblock In Isabelle Guyon, Dominik Janzing, and Bernhard Schölkopf, editors,
  \emph{Proceedings of Workshop on Causality: Objectives and Assessment at NIPS
  2008}, volume~6 of \emph{Proceedings of Machine Learning Research}, pages
  157--164, Whistler, Canada, 12 Dec 2010. PMLR.

\bibitem[Sgouritsa et~al.(2015)Sgouritsa, Janzing, Hennig, and
  Schölkopf]{CURE}
Eleni Sgouritsa, Dominik Janzing, Philipp Hennig, and Bernhard Schölkopf.
\newblock Inference of cause and effect with unsupervised inverse regression.
\newblock In \emph{Proceedings of the 18th International Conference on
  Artificial Intelligence and Statistics (AISTATS)}, volume~38, pages 847--855,
  2015.

\bibitem[Janzing et~al.(2012)Janzing, Mooij, Zhang, Lemeire, Zscheischler,
  Daniusis, Steudel, and Schölkopf]{igci}
Dominik Janzing, Joris Mooij, Kun Zhang, Jan Lemeire, Jakob Zscheischler,
  Povilas Daniusis, Bastian Steudel, and Bernhard Schölkopf.
\newblock Information-geometric approach to inferring causal directions.
\newblock \emph{Artificial Intelligence}, \penalty0 (182-183):\penalty0 1--31,
  2012.

\bibitem[Mooij et~al.(2010)Mooij, Stegle, Janzing, Zhang, and Schölkopf]{gpi}
Joris Mooij, Oliver Stegle, Dominik Janzing, Kun Zhang, and Bernhard
  Schölkopf.
\newblock Probabilistic latent variable models for distinguishing between cause
  and effect.
\newblock In \emph{Advances in Neural Information Processing Systems},
  volume~23, pages 1687--1695, 01 2010.

\bibitem[Lawrence(2005)]{lawrence2005probabilistic}
Neil Lawrence.
\newblock Probabilistic non-linear principal component analysis with gaussian
  process latent variable models.
\newblock \emph{Journal of machine learning research}, 6\penalty0
  (Nov):\penalty0 1783--1816, 2005.

\bibitem[Peters et~al.(2017)Peters, Janzing, and Schölkopf]{elements}
Jonas Peters, Dominik Janzing, and Bernhard Schölkopf.
\newblock \emph{Elements of Causal Inference}.
\newblock MIT Press, 2017.

\bibitem[Daniusis et~al.(2010)Daniusis, Janzing, Mooij, Zscheischler, Steudel,
  Zhang, and Sch{\"o}lkopf]{daniusis}
P~Daniusis, D~Janzing, J~Mooij, J~Zscheischler, B~Steudel, K~Zhang, and
  B~Sch{\"o}lkopf.
\newblock Inferring deterministic causal relations.
\newblock In \emph{26th Conference on Uncertainty in Artificial Intelligence
  (UAI 2010)}, pages 143--150. AUAI Press, 2010.

\bibitem[Durrett(1996)]{durrett}
Richard Durrett.
\newblock \emph{Stochastic Calculus: A Practical Introduction}.
\newblock CRC Press, 1996.

\bibitem[Hall et~al.(1990)Hall, Kay, and Titterington]{diffbased}
Peter Hall, J.~W. Kay, and D.~M. Titterington.
\newblock Asymptotically optimal difference-based estimation of variance in
  nonparametric regression.
\newblock \emph{Biometrika}, 77\penalty0 (3):\penalty0 521--528, 1990.

\bibitem[Mooij et~al.(2016)Mooij, Peters, Janzing, Zscheischler, and
  Schölkopf]{benchmark}
Joris~M. Mooij, Jonas Peters, Dominik Janzing, Jakob Zscheischler, and Bernhard
  Schölkopf.
\newblock Distinguishing cause from effect using observational data: Methods
  and benchmarks.
\newblock \emph{Journal of Machine Learning Research}, \penalty0 (17):\penalty0
  1--102, 2016.

\bibitem[Jacod and Protter(2000)]{probess}
Jean Jacod and Philip Protter.
\newblock \emph{Probability Essentials}.
\newblock Springer, New York, 2000.

\bibitem[Gretton et~al.(2005)Gretton, Bousquet, Smola, and Schölkopf]{hsic}
Arthur Gretton, Olivier Bousquet, Alex Smola, and Bernhard Schölkopf.
\newblock Measuring statistical dependence with hilbert-schmidt norms.
\newblock \emph{Algorithmic Learning Theory}, pages 63--78, 2005.

\bibitem[Hall and Marron(1990)]{onvarest}
Peter Hall and J.~S. Marron.
\newblock On variance estimation in nonparametric regression.
\newblock \emph{Biometrika}, 77\penalty0 (2):\penalty0 415--419, 1990.

\bibitem[Zhou et~al.(2004)Zhou, Weston, Gretton, Bousquet, and
  Sch\"{o}lkopf]{rankonmanifolds}
Dengyong Zhou, Jason Weston, Arthur Gretton, Olivier Bousquet, and Bernhard
  Sch\"{o}lkopf.
\newblock Ranking on data manifolds.
\newblock In S.~Thrun, L.~K. Saul, and B.~Sch\"{o}lkopf, editors,
  \emph{Advances in Neural Information Processing Systems 16}, pages 169--176.
  MIT Press, 2004.

\bibitem[Silverman(1986)]{silverman}
Bernard~Walter Silverman.
\newblock \emph{Density Estimation for Statistics and Data Analysis}.
\newblock London: Chapman \& Hall/CRC, 1986.

\end{thebibliography}
\end{document}